\documentclass{article}

 \usepackage[preprint]{neurips_2026}

\usepackage[utf8]{inputenc} 
\usepackage[T1]{fontenc}    
\usepackage{hyperref}       
\usepackage{url}            
\usepackage{booktabs}       
\usepackage{amsfonts}       
\usepackage{nicefrac}       
\usepackage{microtype}      
\usepackage{xcolor}         
\usepackage{float}
\usepackage{amssymb}            
\usepackage{mathtools}          
\usepackage{mathrsfs}           
\usepackage{amsthm} 
\usepackage{array}
\usepackage{tabularx}
\newtheorem{assumption}{Assumption}          
\newtheorem{theorem}{Theorem}
\usepackage{graphicx}           
\usepackage{subcaption}         
\usepackage[space]{grffile}     
\usepackage{url}                
\usepackage{lipsum}  
\usepackage{balance} 
\DeclareMathOperator*{\argmax}{arg\,max}
\usepackage{subcaption}
\usepackage{bm}
\usepackage{booktabs}
\usepackage[table]{xcolor}
\usepackage{pgfplots}
\pgfplotsset{compat=1.18}
\usepackage{colortbl}

\newcounter{rq}
\renewcommand{\therq}{RQ\arabic{rq}}
\usepackage{wrapfig}
\usepackage{enumitem}

\title{NestRL: A Nested Training Regime for Mutual Adaptation in Human-AI Teaming}

\author{%
  Upasana Biswas\textsuperscript{1} \quad
  Durgesh Kalwar\textsuperscript{1} \quad
  Subbarao Kambhampati\textsuperscript{1} \quad
  Sarath Sreedharan\textsuperscript{2} \\
  \textsuperscript{1}School of Computing and AI, Arizona State University, Tempe, AZ, USA \\
  \textsuperscript{2}Department of Computer Science, Colorado State University, Fort Collins, CO, USA \\
  \texttt{\{ubiswas2, dkalwar, rao\}@asu.edu} \quad \texttt{sarath.sreedharan@colostate.edu} \\
}

\begin{document}

\maketitle

\begin{abstract}
  Mutual adaptation is a central challenge in human-AI teaming, as humans naturally adjust their strategies in response to an AI agent's behavior. Existing approaches attempt to approximate human behavior by diversifying training partners; however, these partners are typically static and fail to capture the adaptive nature of human teammates. When agents are trained jointly in standard multi-agent settings, they often converge to opaque coordination strategies that work only with their co-trained partners, leading to poor generalization. To model adaptive human behavior, we formulate human-AI teaming as an Interactive Partially Observable Markov Decision Process (I-POMDP). We propose \emph{NestRL}, a nested training regime that learns the solution to a finite-level I-POMDP by training agents at each level against adaptive agents from the level below. This exposes agents to adaptive behavior while preventing emergence of opaque coordination strategies. We provide theoretical analysis showing that NestRL agents avoid convergence to partner-specific strategies, and validate this empirically in the Overcooked domain against state-of-the-art baselines. NestRL achieves higher task performance with both unseen adaptive agents and real human teammates, while exhibiting significantly greater adaptability over the course of interaction.
\end{abstract}

\section{Introduction}
Creating reinforcement learning (RL) agents that effectively collaborate with unseen partners i.e. Zero-Shot Coordination remains a central challenge in multi-agent reinforcement learning (MARL). It gets even more challenging when the partners are humans. 
Previous works have some fundamental limitations that prevent them from being deployed in realistic human-AI teaming settings.

Firstly, owing to the diversity among human partners, agents must adapt their behavior to each individual. A common strategy is to train against diverse partners to capture this diversity~\citep{Bard2020, Tesauro1994, Jaderberg2017, Carroll2019, Lupu2021, Canaan2022, Zhao2021}. While exposure to diverse behaviors addresses the challenge that the true human strategy may be unknown and enables agents to coordinate effectively with previously unseen partners during deployment~\citep{OP, survey-ad-hoc}, it does not guarantee that the agent learns a truly adaptive policy~\citep{zsc1, zsc2}. Agents may learn conformant policies~\citep{conf} that perform reasonably across partners without reasoning about partner type, or may even ignore the partner entirely~\citep{zsc1, inter, zsc2}. Secondly, most works overlook an important aspect of human-AI teaming: \emph{mutual adaptation}. In realistic settings, humans do not act independently to their teammates—they continuously adapt in response to the agent’s behavior~\citep{mutual1,h-r1}. 
Traditional approaches often focus on the agent adapting to a human, but do not consider how the human adapts in response~\citep{r-h1, tom2c, PACE}. In reality, as the AI adapts to the human’s strategy, the human observes the agent and adjusts their own behavior in return.

While the problem of adapting to a fixed human type can be formulated as a partially observable problem with hidden partner type~\citep{POMDP-1}, mutual adaptation, however, is more complex: \emph{it requires us to revisit the fundamental way agents are trained.} 
One natural approach is to train two mutually adapting agents, so as to expose the agent to adaptive behavior during training. 
When multiple equilibria exist, jointly training adaptive agents exposes them to such behaviors during training  but often leads to convergence to a single equilibrium and brittle conventions~\citep{matignon, fulda, convention1, hu2020other, Bard2020} that fail to generalize~\citep{convention1, hu2020other, Bard2020}. 

\begin{wrapfigure}{r}{0.42\textwidth}
    \vspace{-12pt}
    \centering
    \includegraphics[width=0.4\textwidth]{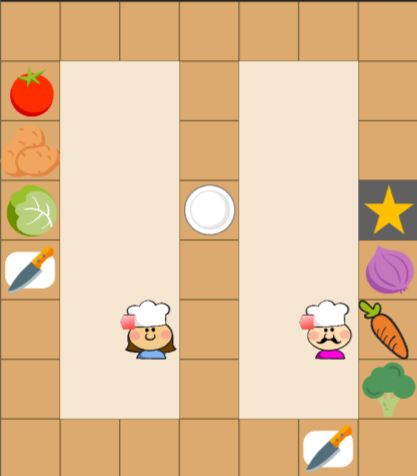}
    \caption{\textbf{Multi-recipe Overcooked domain.} Each agent selects one of three recipes (\textit{PotatoBroccoliSalad}, \textit{LettuceOnionSalad}, or \textit{TomatoCarrotSalad}) per episode. Each recipe requires both agents to contribute one ingredient from their respective sides, place ingredients on a shared plate, and deliver the completed dish to the serving station. Multiple episodes occur within each round, requiring repeated coordination to establish a shared convention on which recipe to prepare.}
    \label{fig:domain}
    \vspace{-8pt}
\end{wrapfigure}
We study these challenges of human-AI teaming in a multi-recipe Overcooked environment (Fig.~\ref{fig:domain}), where a human–AI team must coordinate on one of three recipes without explicit communication, and partners may adapt their strategy over repeated interactions. The agent must reason both about which recipe to pursue and how its actions influence the partner's future behavior. This setting captures two intertwined challenges: multiple equilibria, and mutual adaptation.

We aim to address these challenges by formulating the problem as a bounded level-2 Interactive POMDP~\citep{gmytrasiewicz2005framework}, consistent with findings on bounded human recursive reasoning~\citep{tom1, tom2}. We propose \textbf{NestRL: a nested training regime} that learns this bounded reasoning structure.
In this work, we investigate the following : 
\noindent
\refstepcounter{rq}\label{rq1}\textbf{\therq:} Does the nested training regime improve coordination with adaptive partners compared to existing adaptation methods?
\refstepcounter{rq}\label{rq2}\textbf{\therq:} Does training with adaptive partners prevent collapse to a single coordination convention, including both task-specific conventions (e.g., recipe choice) and task-irrelevant signalling conventions?
\refstepcounter{rq}\label{rq3}\textbf{\therq:} What coordination strategies emerge under nested training when agents interact with adaptive partners?

We evaluate our approach in a modified Overcooked environment~\citep{Carroll2019, charakorn2023} that requires coordination under multiple equilibria, testing generalization to adaptive partners and real humans in a user study. Our method consistently outperforms state-of-the-art baselines for zero-shot cooperation and human-AI teaming, while achieving higher success rates and more stable coordination. Our contributions are:
\begin{itemize}[nosep,leftmargin=*]
\item We formalize mutual adaptation as bounded recursive reasoning grounded in I-POMDPs and introduce \emph{NestRL}, a nested training regime that learns finitely nested reasoning. 
\item We show that NestRL improves coordination with adaptive partners, prevents collapse to both task-specific and task-irrelevant conventions. NestRL agents also learn coordination behaviors such as waiting and initiative-taking, which facilitate mutual adaptation when paired with previously unseen adaptive agents. .
\item We validate these findings in a user study with $N{=}25$ human participants, demonstrating that nested training yields agents that achieve higher task success, adapt within rounds, and are perceived as more responsive partners.
\end{itemize}
\textbf{Related Works : }Population-based training exposes agents to diverse stationary partners~\citep{Bard2020, Tesauro1994, Jaderberg2017, Carroll2019, Lupu2021, Canaan2022, Zhao2021} but does not model partner adaptation. Opponent modeling infers partner policies~\citep{he2016opponent, raileanu2018modeling} or differentiates through partner learning~\citep{foerster2017learning}, while co-learning agents often develop brittle conventions~\citep{hu2020other}. Recent latent-inference methods such as LIAM~\citep{papoudakis2021agent}, LILI~\citep{xie2021learning}, and PACE~\citep{ma2024fast} adapt online to partner behavior but do not explicitly model how partners adapt in response to the agent. Cross-environment training~\citep{jha} addresses environment generalization rather than partner adaptation. I-POMDPs~\citep{gmytrasiewicz2005framework, doshi2005particle, hoang2013interactive, ng2012bayes, han2019ipomdp} model beliefs over other agents' goals and policies but struggle to scale to high-dimensional settings; we approximate finitely nested reasoning via structured training in deep RL. We evaluate in a modified Overcooked variant~\citep{charakorn2023} with distinct coordination equilibria, which prior MARL benchmarks~\citep{lowe2017multi, ellis2023smacv2, Carroll2019} lack. Extended discussion in Appendix~\ref{app:related}.
\section{Problem Formulation}\label{sec:markov_game}
\textbf{Cooperative Markov Game.}
We model each episode as an $m$-player, finite-horizon cooperative Markov game $\langle \mathcal{S}, \mathcal{A}, \mathcal{T}, H, \mathcal{R} \rangle$, where $\mathcal{S}$ is the state space, $\mathcal{A} = \mathcal{A}_1 \times \dots \times \mathcal{A}_m$ is the joint action space, $\mathcal{T}$ is the transition function, $H$ is the horizon, and $\mathcal{R}$ defines a shared reward $r(s^h, a^h)$ for cooperative settings. Agent $i$'s Markovian policy is $\pi_i(a_i \mid s)$; the joint policy of all other agents is $\pi_{-i}$. The value of state s under joint policy $(\pi_i, \pi_{-i})$ is $V^i_{(\pi_i, \pi_{-i})}(s) = \mathbb{E}\left[\sum_{h=1}^H r(s^h, \mathbf{a}^h)\right]$. A joint solution $\bm{\pi^*} = (\pi_i^*, \pi_{-i}^*)$ satisfies mutual best response: $\pi_i^* = BR(\pi_{-i}^*)$ and $\pi_{-i}^* = BR(\pi_i^*)$, where $BR(\pi_{-i}) = \argmax_{\pi_i} V^{i}_{(\pi_i,\pi_{-i})}(s^0)$. We assume the game admits multiple solutions $\Pi^* = \{\bm{\pi^1}, \dots, \bm{\pi^J}\}$, each corresponding to a distinct equilibrium in the game. In our setting, this corresponds to committing to a particular recipe.

\textbf{Incompatible Conventions in Human-AI Teaming:}
Using the notation introduced in \cite{OSP}, we call each equilibrium a \emph{convention}. Conventions $\bm{\pi}, \bm{\pi'}$ are \emph{incompatible} if $(\pi_i, \pi'_{-i}) \notin \Pi^*$ or $(\pi'_i, \pi_{-i}) \notin \Pi^*$ i.e. they are not a solution of the Markov Game. All agents must implicitly or explicitly agree on a particular convention; as long as they are not incompatible, coordination succeeds. Existing MARL approaches for solving a Markov Game often converge to a single solution policy $\bm{\pi} \in \Pi^*$. This is problematic in human–robot teaming, where the human may follow a different, incompatible convention $\bm{\pi}^H$. 
This highlights a fundamental limitation of standard MARL: solving for a single equilibrium is not sufficient in human–AI collaboration. In our setting, the agent must reason not only about the convention the partner is following but also about how the behavior of its partner may evolve over repeated interactions.

\textbf{Mutual Adaptation in Human–Robot Teaming.}
This challenge is compounded by mutual adaptation. While prior work typically assumes the robot adapts to a fixed human, humans also adjust their strategy in response to the robot. 
If the robot treats the human as stationary, its actions may unintentionally induce further shifts in the human’s policy. In games with multiple conventions, such bi-directional adaptation need not converge to a joint solution of the Markov Game. Instead, the interaction may oscillate or drift, preventing the team from stabilizing on any convention. Robust coordination therefore requires reasoning not only about current human behavior, but about how that behavior will change in response to the agent's actions. By modeling the human's adaptation, the agent can select actions that steer the interaction toward a stable, shared equilibrium. 
\section{Proposed Framework}
In human–AI cooperation, optimal actions depend not only on the current environment state, but also on evolving beliefs about the partner’s adaptive behavior. 
We model this interaction as a finitely nested I-POMDP, which explicitly represents the other agent within the state space and captures their adaptation strategy. 
To make learning tractable, we introduce \emph{NestRL: a nested training regime} a nested training regime and learn interactive beliefs using latent embeddings.
\subsection{I-POMDP Formulation}
We formalize the human–AI interaction as a finitely nested \emph{Interactive Partially Observable Markov Decision Process} (I-POMDP) as introduced in \cite{gmytrasiewicz2005framework}. 
An I-POMDP extends a standard POMDP by incorporating explicit models of other agents into the state space. For a robot (agent $i$) interacting with a human (say agent $j$), the level-$l$ I-POMDP is defined as:
\[
\mathcal{M}_{i,l} = 
\langle \mathcal{IS}_{i,l}, \mathcal{A}_i, \mathcal{T}_i, \mathcal{R}_i, \Omega_i, \mathcal{O}_i, \gamma \rangle.
\]
where the components are:
\begin{itemize}[nosep]
\item The interactive state is $\mathcal{IS}_{i,l} = S \times M_{j,l-1}$
where $S$ is the environment state and $M_{j,l-1}$ is the set of possible models of agent j at level $l-1$.
\item In our setting, each model $m_j \in M_{j,l-1}$ represents a distinct adaptive human type and is defined as $m_j = \langle b_j, \hat{\pi}_j \rangle$ where $b_j \in \Delta(S \times M_{i,l-2})$ is the human’s belief over the environment and possible Level-(l-2) models, and $\hat{\pi}_j$ is the human’s policy induced by that belief. 
Importantly, in our setting this model encodes: 1) The convention the human follows (e.g., preferred recipe) 2) The human’s adaptation rule in response to the robot.
\item The robot receives observations $\Omega_i$ through $
O_i : S \times A \times \Omega_i \rightarrow [0,1].
$.We assume \emph{Model Non-Observability (MNO)}: the robot cannot directly observe the human’s internal model or beliefs. It must infer them from observed behavior.
\end{itemize}
\textbf{Nested Beliefs : }Finite nesting in an I-POMDP corresponds directly to bounded Theory of Mind (ToM) reasoning. A level-0 agent reasons only about the environment $S$ and does not explicitly model its partner. A level-1 agent maintains beliefs over $S$ and level-0 models of its partner, corresponding to first-order ToM (“I reason about what you will do”). A level-2 agent maintains beliefs over $S$ and level-1 partner models, corresponding to second-order ToM (“I reason about how you reason about me”). More generally, a level-$k$ agent’s belief is a distribution over physical states and level-$(k-1)$ models of the other agent. In this work, we restrict reasoning to level-2 nesting. Concretely, the Level-2 robot reasons about a Level-1 human who adapts to possible Level-0 behaviors. This formulation captures both uncertainty over human type (convention) and uncertainty over how the human adapts in response to the robot using interactive states.

\textbf{Discussion of Modeling Choice : } We adopt a bounded level-2 I-POMDP as the formal framework for mutual adaptation, motivated by three considerations. First, mutual adaptation specifically requires representing that the partner reasons about the agent, not merely that the partner has unobserved properties. Latent-variable methods~\citep{xie2021learning} can in principle capture this implicitly, but they do not incentivize the agent to discover the recursive reasoning structure. The agent must learn from data that the partner is adapting in response to it's own actions). I-POMDPs enable this by explicitly encode this structure in their formulation, and, therefore motivate our nested training regime. Second, we restrict reasoning to level-2 nesting rather than full recursion. Bounded level-k reasoning is well-established in cognitive science as a model of human strategic behavior~\citep{tom1, tom2}. 

\textbf{Limitations.} The framework assumes partners adapt through recursive reasoning. Partners may adapt through different processes. Humans are occasionally irrational and may not employ recursive reasoning when adapting~\citep{nat}. However, recursive reasoning is a well-established model of human strategic behavior and forms the foundation of the theory of mind literature~\citep{tom1}. We therefore adopt it as a principled modeling choice.
Higher-order partners (level-3 and beyond) are not modeled yet.
\begin{figure}
    \includegraphics[width=1\linewidth]{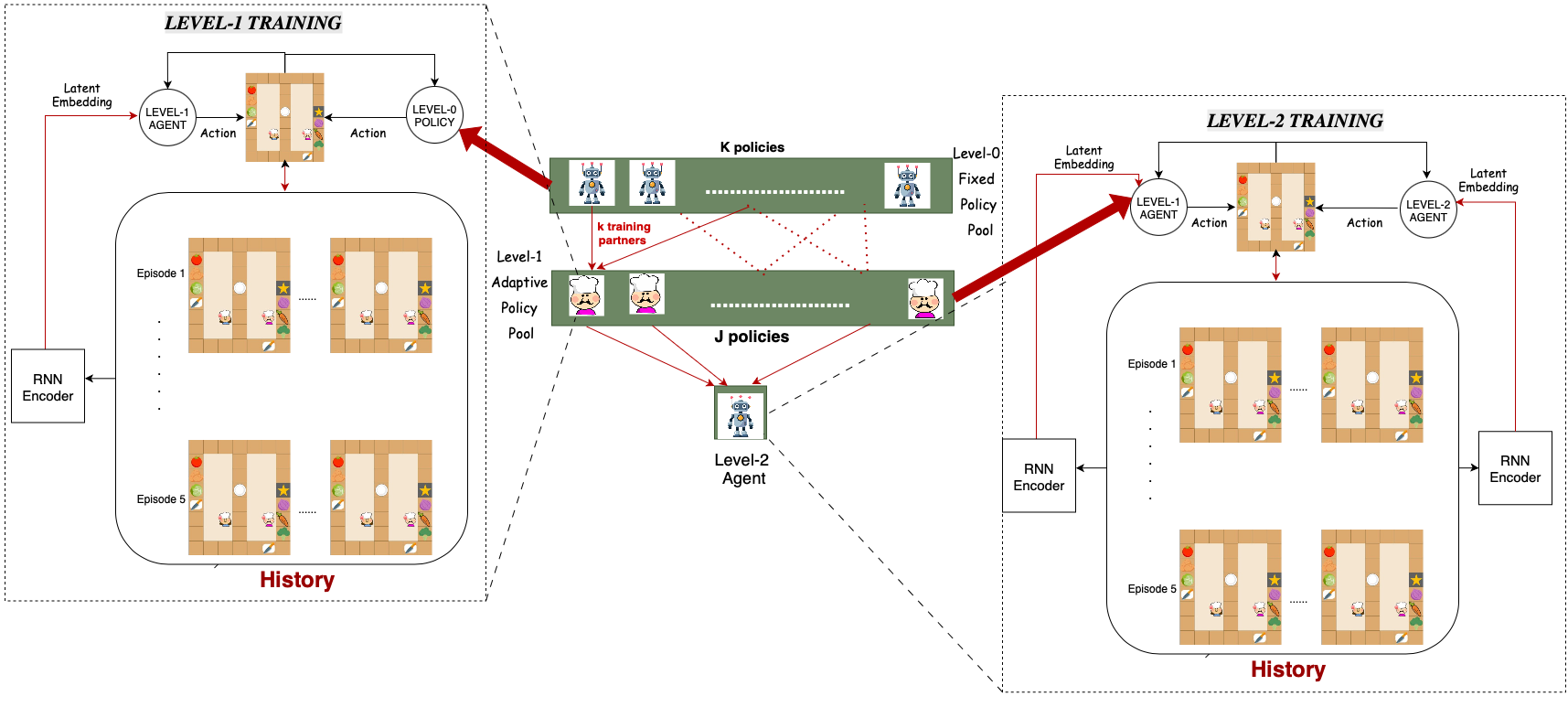}
    \caption{\textbf{NestRL: Nested training for mutual adaptation: }Level-1 human policies are trained against fixed robot policies, producing diverse adaptive behaviors corresponding to different coordination conventions. A level-2 robot is then trained against these adaptive partners, learning to maintain uncertainty over partner types and to act in a way that anticipates how the partner will adapt in return. This models bounded recursive reasoning i+n an I-POMDP framework and promotes coordination across multiple equilibria.}
    \label{fig:arch}
\end{figure}
\subsection{NestRL: A Nested Training Regime}
Solving a finitely nested I-POMDP exactly is intractable, as it requires maintaining and updating beliefs over recursively defined partner models. 
We therefore introduce \emph{NestRL} to learn level-2 interactive reasoning through a nested training regime ,as shown in Fig.~\ref{fig:arch}.
We consider a two-player cooperative Markov game with a robot policy $\pi_R$ and a partner policy $\pi_P$. 
\paragraph{Level-1 Training: }We first train a set of Level-1 partner policies. 
Each Level-1 policy $\pi_P^j$ is trained against a subset of \emph{fixed} Level-0 policies 
$\{\pi_R^1, \dots, \pi_R^K\}$, which act based on the environment state and do not model the partner.
Consequently, the Level-1 agent must adapt to the behavior of these fixed policies, corresponding to \emph{first-order reasoning} where the Level-1 policy reasons about Level-0 behavior.
During this stage, the partner is trained using a POMDP formulation: it observes the environment and the robot’s behavior, and learns to adapt to different level-0 policies. 
This produces a set of adaptive partner policies 
$\{\pi_P^1, \dots, \pi_P^J\}$, which act as \emph{proxies} for adaptive human behaviors at Level-1 reasoning. 

\textbf{Level-2 Training: }We then train the robot policy $\pi_R$ against the set of adaptive partner policies 
$\{\pi_P^1, \dots, \pi_P^J\}$.
From the robot’s perspective, the partner is drawn from this set but is unknown at the start of each episode.
Thus, the robot must act under uncertainty over which Level-1 partner model it is interacting with. The training stage therefore implements a Level-2 I-POMDP approximation, where the interactive state of the robot consists of the environment state and the partner level-1 model. This corresponds to \emph{second-order reasoning} where the robot reasons about a Level-1 partner that itself reasons about Level-0 behavior.

Overall, NestRL captures reasoning about level-2 interactive beliefs by exposing the robot to adaptive partner policies during training, without requiring explicit recursive belief updates.
This bounded nesting is key: \cite{loftin} show that achieving zero regret against partners with unbounded reasoning depth may be impossible, and propose cognitive hierarchies as a remedy. \emph{NestRL} instantiates this idea by bounding the reasoning depth and restricting each level to best-respond only to fixed lower-level policies.

\textbf{Maintaining Diversity through Nested Adaptation: }In standard multi-agent training, co-trained agents often converge to a single convention that reflects the specific interaction patterns seen during training~\citep{convention1, hu2020other, Bard2020}, limiting the generalization to partners following different, incompatible conventions.
In contrast, NestRL prevents such collapse by construction, resulting in adaptive policies that remain compatible with multiple joint solutions. We now formalize this property.

\begin{theorem}[Non-convergence to a Single Convention under Nested Training]
Consider a Markov game admitting multiple joint solution policies 
$\Pi^* = \{\pi^1, \dots, \pi^m\}$.
Agents are trained via a nested process such that a level-$(n+1)$ agent learns $\pi_{n+1}$ paired with level-$n$ policies.
Suppose:
\begin{assumption}
    Level-0 policies are fixed and correspond to distinct, incompatible conventions in $\Pi^*$.
    \label{A1}
\end{assumption}
\begin{assumption}
    At each level $n \ge 1$, the learning agent optimizes its behavior against a finite set of lower-level policies $\{\pi_{n-1}^1, \dots, \pi_{n-1}^{k_{n-1}}\}$ and encodes distinct best responses to each.
    \label{A2}
\end{assumption}

Then for every $n \ge 1$, the learned policy $\pi_n$ does not collapse to a single joint solution in $\Pi^*$.
\label{theorem}
\end{theorem}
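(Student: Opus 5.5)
The proof goes by induction on the level $n$, and the inductive claim is stronger than the theorem. For every $n \ge 0$, the set of level-$n$ policies $\{\pi_n^1,\dots,\pi_n^{k_n}\}$ used to train level $n+1$ contains at least two members. These members induce distinct, mutually incompatible conventions in $\Pi^*$: there are indices $a \neq b$ and conventions $\bm{\pi}^{p}\neq \bm{\pi}^{q}$ such that $\pi_n^a$ coordinates only under $\bm{\pi}^p$ and $\pi_n^b$ only under $\bm{\pi}^q$. The theorem follows from this claim together with one further step. Under Assumption~\ref{A2}, $\pi_n$ encodes a distinct best response to each element of this set. So $\pi_n$, read as a map from partner to behavior, realizes at least two joint solutions. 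It therefore cannot coincide with the component of any single $\bm{\pi}\in\Pi^*$.

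The base case $n=0$ is exactly Assumption~\ref{A1}. The level-0 policies are fixed, they are never updated by later training, and they lie in distinct, incompatible conventions.

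For the inductive step, suppose the claim holds at level $n-1$. Consider $BR(\pi_{n-1}^a)$ and $BR(\pi_{n-1}^b)$, where $\pi_{n-1}^a$ belongs to convention $\bm{\pi}^p$ and $\pi_{n-1}^b$ to $\bm{\pi}^q$.
\begin{itemize}
\item By the mutual-best-response characterization of Section~\ref{sec:markov_game}, the value-maximizing response to a partner committed to $\bm{\pi}^p$ is the complementary component of $\bm{\pi}^p$, and likewise for $\bm{\pi}^q$.
\item Incompatibility means $(\pi^p_i,\pi^q_{-i})\notin\Pi^*$ or $(\pi^q_i,\pi^p_{-i})\notin\Pi^*$. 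So no single non-conditional policy is simultaneously a best response to both partners.
\item Hence the encoded best responses $\pi_n^{\cdot|a}$ and $\pi_n^{\cdot|b}$ are distinct.
\end{itemize}
Now take the level-$n$ policies spawned at this level (one per lower-level subset or training seed, as in NestRL's Level-1 stage). Paired with their respective partners, they inherit the two incompatible conventions. This gives the inductive claim at level $n$.

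The step I expect to be the main obstacle is the first bullet above. It asserts that a best response to a lower-level policy lands in the same convention, and there are two reasons this needs care. First, lower-level policies at $n\ge 2$ are themselves adaptive, so "belonging to convention $\bm{\pi}^p$" has to be defined for them. I would define it as: the joint play of $(\pi_n, \pi_{n-1}^a)$ converges to $\bm{\pi}^p$. Second, the argument depends on the fixed anchoring at level 0. Since lower levels are frozen and never co-trained, the convention labels propagate upward and cannot be renegotiated into a shared one.

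The essential contrast is with co-training. There both policies move and may jointly settle on one convention. Here, at every level the partner set is frozen and spans at least two conventions, so collapse would contradict optimality against some member of the set.
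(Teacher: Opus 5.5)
Your induction has the same skeleton as the paper's: propagate ``the training set spans several conventions'' upward and let Assumption~\ref{A2} force distinct responses. The strengthened invariant is where it breaks. You require that at every level $n$ the training set contains $\pi_n^a,\pi_n^b$ that coordinate \emph{only} under $\bm{\pi}^p$ and \emph{only} under $\bm{\pi}^q$. That holds at $n=0$ by Assumption~\ref{A1}, but for $n\ge 1$ it contradicts the statement you are proving. A level-$n$ policy trained per Assumption~\ref{A2} against partners from two incompatible conventions coordinates under both, so it is committed to neither. The only level-$n$ policies that could satisfy your invariant are those trained against a single convention, i.e.\ collapsed ones, and for them the theorem's conclusion fails. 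Already in NestRL's Level-1 stage, a policy trained on a subset of two L0 recipes violates ``only under''. Your inductive step never actually produces such policies. The sentence ``paired with their respective partners, they inherit the two incompatible conventions'' swaps a property of the policy $\pi_n^a$ for a property of the pair $(\pi_n^a,\pi_{n-1}^a)$. The paper's proof instead uses the weaker, consistent hypothesis that each level-$n$ policy is compatible with \emph{several} conventions.

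With that weaker invariant, your first bullet no longer applies for $n\ge 2$, and this is a missing idea rather than a technicality. The partners $\pi_{n-1}^j$ are then adaptive, so a best response to one of them is not pinned to a single convention. A value-maximizing learner can steer \emph{every} adaptive partner into the same convention: commit early to one recipe and let each partner follow, which is the lead--follow mechanism the paper itself reports for L2. Its responses to different partners would then remain ``distinct'' only in timing or trajectory. Your proposed definition, under which $\pi_{n-1}^a$ belongs to $\bm{\pi}^p$ if the joint play of $(\pi_n,\pi_{n-1}^a)$ converges to $\bm{\pi}^p$, cannot repair this, because it labels the partner by the learner's own behavior. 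If the learner steers all partners to $\bm{\pi}^p$, every partner ``belongs'' to $\bm{\pi}^p$ and the incompatibility bullet has nothing to act on. So the best-response/incompatibility argument closes the step only at $n=1$, where partners are fixed. Beyond that, non-collapse must come from Assumption~\ref{A2} read as ``the distinct best responses lie in distinct conventions''. That is essentially what the paper's induction step asserts rather than derives.
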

\begin{proof} We prove this by the principle of mathematical induction. \\
\noindent\textit{Base Case ($n=1$):}
The level-1 agent trains against multiple fixed level-0 policies 
$\{\pi_0^1, \dots, \pi_0^{k_0}\}$ corresponding to incompatible conventions. Each adaptive level-1 partner learns a policy $\pi_1$
such that $\forall j \in k_0, (\pi_1, \pi_0^j) \in \Pi^*$. Under Assumption~\ref{A2}, $\pi_1$ must encode distinct responses to different level-0 policies. \\
\medskip
\noindent\textit{Induction Hypothesis:}
Assume that at level $n$, the learned policy $\pi_n$ is compatible with distinct conventions in $\Pi^*$.\\
\medskip
\noindent\textit{Induction Step ($n \rightarrow n+1$):}
The level-$(n+1)$ agent trains against the set 
$\{\pi_n^1, \dots, \pi_n^{k_n}\}$. 
Under the induction hypothesis, each $\pi_n^j$ is itself compatible with a (possibly distinct) subset of conventions in $\Pi^*$. Therefore, the training regime exposes the agent at level $(n+1)$ to multiple conventions. Moreover, under Assumption~\ref{A2}, $\pi_{n+1}$ must encode distinct responses to different $\pi_n^j$. 
Since each $\pi_n^j$ corresponds to at least one convention in $\Pi^*$, the learned policy $\pi_{n+1}$ remains compatible with multiple joint solutions. Thus, $\pi_{n+1}$ cannot collapse to a single joint solution in $\Pi^*$. \\
\noindent\textbf{Conclusion:}
By induction, for every $n \ge 1$, the nested training process yields a policy $\pi_n$ that remains compatible with multiple conventions in $\Pi^*$ and does not collapse to a single joint solution.
\end{proof}
Therefore, in contrast to a training regime where the learners co-adapt simultaneously, NestRL systematically prevents collapse to a single joint solution. Hence, a robot trained through nested adaptation learns behaviors that generalize across multiple human conventions, thus reducing the risk of coordination failure when interacting with humans who may follow diverse conventions. 

\textbf{Belief Estimation : }Interactive decision-making between adaptive agents is non-Markovian: the optimal action at time $t$ depends on the full history $h_t = \{(o_\tau, a_\tau)\}_{\tau=0}^{t}$, which determines beliefs over the partner's model. Exact I-POMDP belief tracking is intractable, so we amortize inference through a learned latent embedding $z_t^{(n)} = f_\theta^{(n)}(h_t)$, computed by an RNN encoder. The level-$n$ policy conditions on both the current observation and this embedding, $a_t \sim \pi_\theta^{(n)}(a \mid o_t, z_t^{(n)})$, replacing explicit belief updates with amortized inference. Encoder and policy are trained end-to-end with PPO. This renders the policy approximately Markovian in $(o_t, z_t^{(n)})$ while preserving the recursive reasoning structure of the underlying I-POMDP. See Appendix~\ref{subsec:nonmarkovian} for more details.

\section{Experimental Setup}
In this section, we present the experimental setup. Further details are provided in Appendix~\ref{app:exp}. 

\textbf{Domain:} We evaluate NestRL in the Overcooked collaborative cooking environment~\citep{Carroll2019}, a standard benchmark for human–AI coordination. We adopt a multi-recipe, required-coordination variant~\citep{charakorn2023} (Fig.~\ref{fig:domain}) in which agents must complete \emph{one of three} distinct recipes per episode: 
R1: LettuceOnionSalad,
R2: TomatoCarrotSalad,
or R3:PotatoBroccoliSalad. 

\textbf{Fixed(Level-0) Partners: }We evaluate whether NestRL agents can generalize across conventions by pairing them with Level-0 policies.
To construct Level-0 policies used in nested training and evaluation, we design a rule-based peer pool (adopted from~\cite{PACE}). Each Level-0 policy is specialized to a single recipe and ignores other ingredients. These policies are fixed and non-adaptive. 

\textbf{Adaptive(Level-1) Partner: }We evaluate whether NestRL agents generalize to \emph{held-out Level-1 policies}. We generate adaptive partners that were not seen by the Level-2 agent during training by using random seeds. At test time, agents trained are paired with these unseen Level-1 partners. Performance is measured as average task success over multiple rounds. Strong performance under this setting indicates robustness to adaptive behaviors.

\textbf{Baselines.}
We compare against state-of-the-art adaptive MARL baselines that represent the dominant paradigms for training agents who can adapt to a diverse set of partners: 1) \textbf{LIAM}~\citep{papoudakis2021agent}: learns a partner model via auxiliary prediction of partner observations and actions. 2) \textbf{LILI}~\citep{xie2021learning}: encodes partner behavior implicitly through cross-episode context. 3)  \textbf{Generalist}~\citep{PACE}: leverages cross-episode adaptation without explicit opponent modeling. 4)  \textbf{PACE}~\citep{PACE}: performs context-aware exploration and incorporates a peer-identification auxiliary objective.
We note that there are works which achieve zero-shot coordination through training on diverse environments rather than diverse partners~\citep{jha}.While related, their approach addresses environment generalization rather than partner adaptation, and the focus of our study is to compare against baselines who can adapt to diverse partners. All baselines are trained against fixed (non-adaptive) partners and do not encounter adaptive behaviors during training. This isolates the effect of exposure to adaptive policies.
\section{Results}\label{sec:results}
\subsection{Performance with Adaptive Partners}
To test \ref{rq1}, we evaluate all agents with both non-adaptive (Level-0, L0) and adaptive (Level-1, L1) partners at execution time.
We evaluate all agents with both non-adaptive (Level-0, L0) and adaptive (Level-1, L1) partners at execution time. Evaluation is conducted over $10$ rounds with $5$ episodes per round, and we report the average success rate with each partner.
This setup isolates robustness of each method to adaptive partners.

As shown in Table~\ref{tab:short-eval}, all agents achieve similarly high performance with non-adaptive L0 partners (95\%), indicating that each method learns a policy that generalizes well to static partners who don't adapt back to the agent. However, with L1 partners, the NestRL agent achieves an average success rate of $81.3\%$, substantially outperforming \textsc{LiLi} ($60.0\%$), \textsc{PACE} ($54.7\%$), and \textsc{Generalist} ($54.7\%$). 
While baselines experience significant drop in performance when going from L0 to L1 partners, NestRL exhibits a much smaller degradation. Mann–Whitney $U$ tests confirm that NestRL significantly outperforms all baselines under adaptive partners (NestRL vs.\ \textsc{Generalist}: $U=1260$, $p=.007$; L2 vs.\ \textsc{PACE}: $U=1238$, $p=.012$; L2 vs.\ \textsc{LiLi}: $U=1215$, $p=.020$), with medium effect sizes ($d=0.44$–$0.54$).

Beyond average success, we also measure how much an agent’s performance changes across different partners. We report the coefficient of variation (CV). 
NestRL exhibits substantially lower variability (CV = 0.108) compared to \textsc{LiLi} (0.198), \textsc{PACE} (0.281), and \textsc{Generalist} (0.210). For example, \textsc{PACE} achieves $73.3\%$ with one adaptive partner but only $33.3\%$ with another, indicating sensitivity to specific coordination conventions. 
In contrast, NestRL maintains consistently high performance across all adaptive partners ($66.7\%$–$93.3\%$).

\textbf{Statistical Analysis of Findings: }NestRL's confidence interval (0.71--0.89) does not overlap any baseline's. The gap over each baseline is significant on both (i) a paired test across the five partners and (ii) a pooled test across all 75 episodes per arm:
NestRL vs.\ \textsc{LiLi} $+0.21$ (paired $t_4{=}2.58$, $p{=}0.03$; pooled $\chi^2{=}8.2$, $p{=}0.004$);
vs.\ \textsc{PACE} $+0.27$ ($t_4{=}2.90$, $p{=}0.02$; $\chi^2{=}12.3$, $p{<}10^{-3}$);
vs.\ \textsc{Generalist} $+0.27$ ($t_4{=}12.6$, $p{<}10^{-3}$; $\chi^2{=}12.3$, $p{<}10^{-3}$).
\begin{table}[t]
\centering
\small
\renewcommand{\arraystretch}{1.15}
\setlength{\tabcolsep}{4.5pt}
\begin{tabular}{l c | ccccc | c c}
\toprule
& \textbf{Non-adaptive} & \multicolumn{5}{c|}{\textbf{Adaptive partners (held out)}} & \multicolumn{2}{c}{\textbf{Aggregate over P1--P5}} \\
\cmidrule(lr){2-2} \cmidrule(lr){3-7} \cmidrule(lr){8-9}
\textbf{Agent} & L0 & P1 & P2 & P3 & P4 & P5
& Avg.\ success & CV \\
\midrule
\rowcolor{gray!12}
NestRL (ours)
& 0.95
& 0.67 & 0.80 & 0.93 & 0.87 & 0.80
& \textbf{0.81} \;{\footnotesize(0.71--0.89)}
& \textbf{0.11} \\
\textsc{LiLi}
& 0.95
& 0.73 & 0.40 & 0.67 & 0.53 & 0.67
& 0.60 \;{\footnotesize(0.49--0.70)}$^{\ast\ast}$
& 0.20 \\
\textsc{PACE}
& 0.96
& 0.73 & 0.33 & 0.67 & 0.60 & 0.40
& 0.55 \;{\footnotesize(0.43--0.65)}$^{\ast\ast\ast}$
& 0.28 \\
\textsc{Generalist}
& 0.94
& 0.40 & 0.53 & 0.73 & 0.60 & 0.47
& 0.55 \;{\footnotesize(0.43--0.65)}$^{\ast\ast\ast}$
& 0.21 \\
\bottomrule
\end{tabular}
\caption{ (10 rounds $\times$ 5 episodes) Each cell is the agent's success rate (fraction of episodes solved). L0 denotes average success rate when NestRL is paired with non-adaptive partners and P1–P5 denote average success rate when NestRL is paired with held-out adaptive L1 partners. The parenthesised range is the 95\% confidence interval (Wilson score). CV $= \sigma / \mu$ measures sensitivity to partner identity, where $\sigma$ and $\mu$ are the standard deviation and mean of success rates across L1 partners measures sensitivity to partner identity. $^{\ast\ast}p{<}0.01$, $^{\ast\ast\ast}p{<}0.001$ (pooled test vs.\ NestRL).
}
\label{tab:short-eval}
\end{table}

These results validate the motivation behind nested training. By training the robot against adaptive policies, NestRL does not overfit to specific training partners and also learns coordination strategies that remain effective when interacting with adaptive agents. In contrast, baselines trained only against fixed partners struggle when faced with adaptive partners.
\subsection{Empirical Analysis of Convention Collapse}
\label{subsec:collapse}
To test \ref{rq3}, we compare NestRL against IPPO~\citep{ippo}, a co-adaptation regime where agents simultaneously update their policies. We distinguish two convention types: \textbf{task-specific} conventions, where each recipe corresponds to a distinct joint solution, and \textbf{task-irrelevant} conventions (signalling)—arbitrary behavioral patterns with no task content that emerge during co-adaptation~\citep{hu2020other}. Even agents converging to the same recipe may fail to coordinate if they developed incompatible signalling conventions. We quantify task-specific diversity via the entropy of the recipe distribution $H = -\sum_{r \in \mathcal{R}} p_r \log p_r$, where $H_{\max} = \log 3 \approx 1.10$ (uniform) and $H = 0$ indicates complete collapse.

We train five self-play agents with different seeds and evaluate cross-play (Figure~\ref{fig:crossplay}). Self-play succeeds at $92\%$ but cross-play averages $\leq 35\%$, with near-zero recipe entropy ($H < 0.15$): each seed converges to a distinct convention. Within-recipe cross-play (Figures~\ref{fig:crossplay}b-d) shows that even agents on the same recipe fail to coordinate—e.g., in $\mathcal{R}_2$, sp1$\times$sp1 achieves $100\%$ but sp1$\times$sp4 reaches $0\%$—demonstrating collapse of both task-specific and task-irrelevant conventions.

In contrast, NestRL paired with three Level-0 policies implementing distinct incompatible conventions achieves $95.2\%$, $98.5\%$, and $94.7\%$ success across recipes, with near-maximal entropy $H = 1.09$. The agent coordinates across all three recipes equally, yielding near-maximal convention entropy $H = 1.09$.Therefore, unlike self-play, nested training exposes the agent to incompatible conventions during learning,
preventing it from collapsing to a single convention. These results empirically validate Theorem~\ref{theorem}:
\emph{nested adaptation preserves compatibility with multiple conventions,
while simultaneous co-adaptation leads to convention collapse}.
\begin{figure*}[t]
\centering
\small
\setlength{\tabcolsep}{3pt}
\renewcommand{\arraystretch}{1.1}

\begin{tabular}{ccc}

\begin{minipage}{0.50\textwidth}
\centering
\begin{tabular}{c|ccccccccc}
 & 1&2&3&4&5&6&7&8&9\\
\hline
1 & \cellcolor[RGB]{8,48,107}\textcolor{white}{\textbf{100}} & \cellcolor[RGB]{247,251,255}0 & \cellcolor[RGB]{247,251,255}0 & \cellcolor[RGB]{247,251,255}0 & \cellcolor[RGB]{222,235,247}10 & \cellcolor[RGB]{198,219,239}20 & \cellcolor[RGB]{222,235,247}10 & \cellcolor[RGB]{49,130,189}\textcolor{white}{80} & \cellcolor[RGB]{247,251,255}0 \\
2 & \cellcolor[RGB]{247,251,255}0 & \cellcolor[RGB]{8,48,107}\textcolor{white}{\textbf{100}} & \cellcolor[RGB]{247,251,255}0 & \cellcolor[RGB]{247,251,255}0 & \cellcolor[RGB]{247,251,255}0 & \cellcolor[RGB]{247,251,255}0 & \cellcolor[RGB]{247,251,255}0 & \cellcolor[RGB]{247,251,255}0 & \cellcolor[RGB]{247,251,255}0 \\
3 & \cellcolor[RGB]{247,251,255}0 & \cellcolor[RGB]{247,251,255}0 & \cellcolor[RGB]{8,48,107}\textcolor{white}{\textbf{100}} & \cellcolor[RGB]{247,251,255}0 & \cellcolor[RGB]{247,251,255}0 & \cellcolor[RGB]{247,251,255}0 & \cellcolor[RGB]{247,251,255}0 & \cellcolor[RGB]{247,251,255}0 & \cellcolor[RGB]{107,174,214}40 \\
4 & \cellcolor[RGB]{247,251,255}0 & \cellcolor[RGB]{247,251,255}0 & \cellcolor[RGB]{247,251,255}0 & \cellcolor[RGB]{8,48,107}\textcolor{white}{\textbf{100}} & \cellcolor[RGB]{158,202,225}50 & \cellcolor[RGB]{8,48,107}\textcolor{white}{100} & \cellcolor[RGB]{33,113,181}\textcolor{white}{90} & \cellcolor[RGB]{247,251,255}0 & \cellcolor[RGB]{247,251,255}0 \\
5 & \cellcolor[RGB]{8,48,107}\textcolor{white}{100} & \cellcolor[RGB]{247,251,255}0 & \cellcolor[RGB]{247,251,255}0 & \cellcolor[RGB]{8,48,107}\textcolor{white}{100} & \cellcolor[RGB]{8,48,107}\textcolor{white}{\textbf{100}} & \cellcolor[RGB]{8,48,107}\textcolor{white}{100} & \cellcolor[RGB]{158,202,225}50 & \cellcolor[RGB]{222,235,247}10 & \cellcolor[RGB]{247,251,255}0 \\
6 & \cellcolor[RGB]{247,251,255}0 & \cellcolor[RGB]{247,251,255}0 & \cellcolor[RGB]{247,251,255}0 & \cellcolor[RGB]{49,130,189}\textcolor{white}{80} & \cellcolor[RGB]{158,202,225}50 & \cellcolor[RGB]{8,48,107}\textcolor{white}{\textbf{100}} & \cellcolor[RGB]{8,48,107}\textcolor{white}{100} & \cellcolor[RGB]{247,251,255}0 & \cellcolor[RGB]{247,251,255}0 \\
7 & \cellcolor[RGB]{247,251,255}0 & \cellcolor[RGB]{247,251,255}0 & \cellcolor[RGB]{247,251,255}0 & \cellcolor[RGB]{33,113,181}\textcolor{white}{90} & \cellcolor[RGB]{158,202,225}50 & \cellcolor[RGB]{8,48,107}\textcolor{white}{100} & \cellcolor[RGB]{8,48,107}\textcolor{white}{\textbf{100}} & \cellcolor[RGB]{247,251,255}0 & \cellcolor[RGB]{247,251,255}0 \\
8 & \cellcolor[RGB]{8,48,107}\textcolor{white}{100} & \cellcolor[RGB]{247,251,255}0 & \cellcolor[RGB]{247,251,255}0 & \cellcolor[RGB]{247,251,255}0 & \cellcolor[RGB]{247,251,255}0 & \cellcolor[RGB]{247,251,255}0 & \cellcolor[RGB]{247,251,255}0 & \cellcolor[RGB]{8,48,107}\textcolor{white}{\textbf{100}} & \cellcolor[RGB]{247,251,255}0 \\
9 & \cellcolor[RGB]{247,251,255}0 & \cellcolor[RGB]{247,251,255}0 & \cellcolor[RGB]{198,219,239}20 & \cellcolor[RGB]{247,251,255}0 & \cellcolor[RGB]{247,251,255}0 & \cellcolor[RGB]{247,251,255}0 & \cellcolor[RGB]{247,251,255}0 & \cellcolor[RGB]{247,251,255}0 & \cellcolor[RGB]{8,48,107}\textcolor{white}{\textbf{100}} \\
\end{tabular}

\vspace{4pt}
\textbf{(a)} Self-play cross-play matrix
\end{minipage}

&

\begin{minipage}{0.34\textwidth}
\centering
\small

\begin{tabular}{c|cc}
\multicolumn{3}{c}{$\mathcal{R}_1$ (LO)}\\
 &3&9\\
\hline
3&\cellcolor[RGB]{8,48,107}\textcolor{white}{100}&\cellcolor[RGB]{107,174,214}40\\
9&\cellcolor[RGB]{198,219,239}20&\cellcolor[RGB]{8,48,107}\textcolor{white}{100}
\end{tabular}

\vspace{6pt}

\begin{tabular}{c|cccccc}
\multicolumn{7}{c}{$\mathcal{R}_2$ (TC)}\\
 &1&4&5&6&7&8\\
\hline
1&\cellcolor[RGB]{8,48,107}\textcolor{white}{100}&\cellcolor[RGB]{247,251,255}0&\cellcolor[RGB]{222,235,247}10&\cellcolor[RGB]{198,219,239}20&\cellcolor[RGB]{222,235,247}10&\cellcolor[RGB]{49,130,189}\textcolor{white}{80}\\
4&\cellcolor[RGB]{247,251,255}0&\cellcolor[RGB]{8,48,107}\textcolor{white}{100}&\cellcolor[RGB]{158,202,225}50&\cellcolor[RGB]{8,48,107}\textcolor{white}{100}&\cellcolor[RGB]{33,113,181}\textcolor{white}{90}&\cellcolor[RGB]{247,251,255}0\\
5&\cellcolor[RGB]{8,48,107}\textcolor{white}{100}&\cellcolor[RGB]{8,48,107}\textcolor{white}{100}&\cellcolor[RGB]{8,48,107}\textcolor{white}{100}&\cellcolor[RGB]{8,48,107}\textcolor{white}{100}&\cellcolor[RGB]{158,202,225}50&\cellcolor[RGB]{222,235,247}10\\
6&\cellcolor[RGB]{247,251,255}0&\cellcolor[RGB]{49,130,189}\textcolor{white}{80}&\cellcolor[RGB]{158,202,225}50&\cellcolor[RGB]{8,48,107}\textcolor{white}{100}&\cellcolor[RGB]{8,48,107}\textcolor{white}{100}&\cellcolor[RGB]{247,251,255}0\\
7&\cellcolor[RGB]{247,251,255}0&\cellcolor[RGB]{33,113,181}\textcolor{white}{90}&\cellcolor[RGB]{158,202,225}50&\cellcolor[RGB]{8,48,107}\textcolor{white}{100}&\cellcolor[RGB]{8,48,107}\textcolor{white}{100}&\cellcolor[RGB]{247,251,255}0\\
8&\cellcolor[RGB]{8,48,107}\textcolor{white}{100}&\cellcolor[RGB]{247,251,255}0&\cellcolor[RGB]{247,251,255}0&\cellcolor[RGB]{247,251,255}0&\cellcolor[RGB]{247,251,255}0&\cellcolor[RGB]{8,48,107}\textcolor{white}{100}
\end{tabular}

\vspace{4pt}
\textbf{(b–c)} Within-recipe cross-play
\end{minipage}

\begin{minipage}{0.14\textwidth}
\centering
\small
\begin{tabular}{lc}
\multicolumn{2}{c}{NestRL $\times$ L0}\\
\toprule
Partner&Success\\
\midrule
$L0-R_1$&\cellcolor[RGB]{8,48,107}\textcolor{white}{95.2}\\
$L0-R_2$&\cellcolor[RGB]{8,48,107}\textcolor{white}{98.5}\\
$L0-R_3$&\cellcolor[RGB]{8,48,107}\textcolor{white}{94.7}\\
\midrule
Mean&\cellcolor[RGB]{8,48,107}\textcolor{white}{96.1}\\
\bottomrule
\end{tabular}

\vspace{4pt}
\textbf{(e)} NestRL generalization
\end{minipage}
\end{tabular}

\caption{Convention collapse in self-play vs.\ generalization under nested training. \textbf{(a)}~Cross-play matrix for 9 self-play agents grouped by recipe; darker = higher success. Diagonal (bold) shows self-coordination ($100\%$); off-diagonal shows cross-play failure. \textbf{(b-c)}~Within-recipe submatrices isolate task-irrelevant signalling: agents sharing the same recipe still fail to coordinate (e.g., sp5$\times$sp8 = $10\%$ in $\mathcal{R}_2$; sp9 fails with sp3, sp9 in $\mathcal{R}_1$). \textbf{(e)}~The NestRL agent trained via nested adaptation achieves $>94\%$ success with all three L0 convention types, demonstrating robustness to both task-specific and task-irrelevant convention variation.}
\label{fig:crossplay}
\end{figure*}
\begin{figure}[t]
    \centering
    \includegraphics[width=\linewidth]{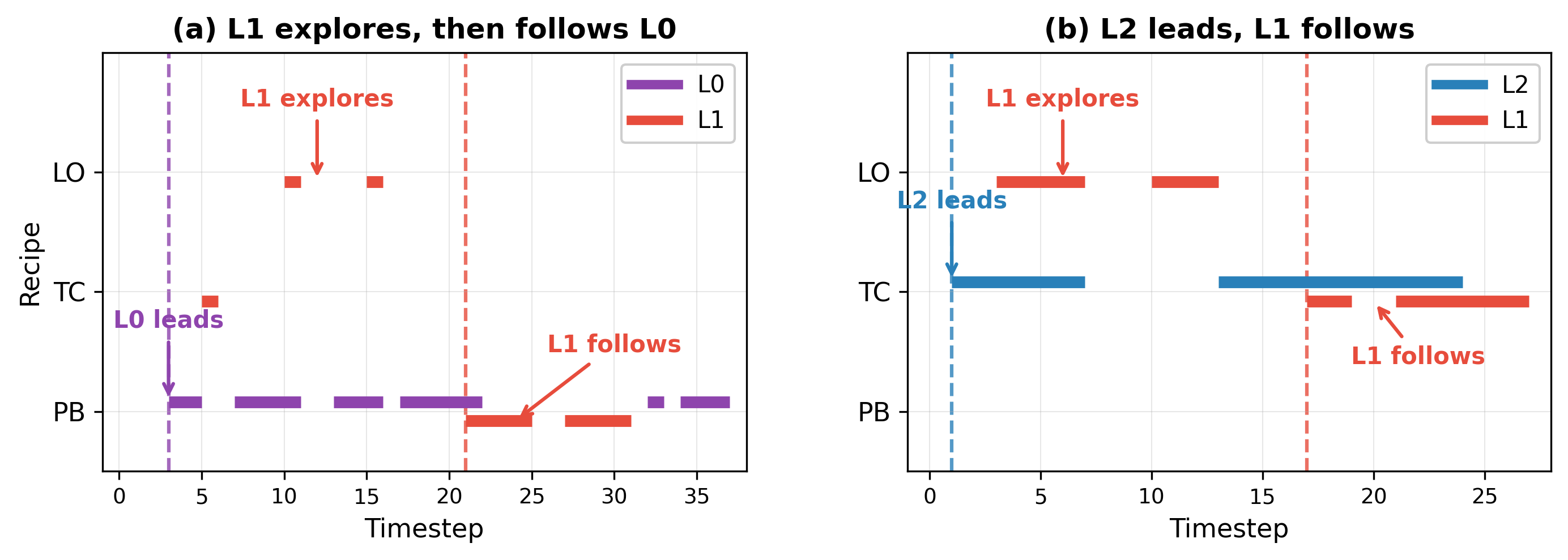}
    \caption{\textbf{Emergent coordination patterns.} Recipe commitment trajectories for representative episodes. Horizontal bars show which recipe each agent works on over time; vertical dashed lines mark key coordination moments. \textbf{(a)}~L1 explores multiple recipes while waiting for L0 to reveal its commitment, then follows L0's lead on the Potato-Broccoli recipe. \textbf{(b)}~L2 takes initiative by committing to Tomato--Carrot at $t{=}1$; L1 initially explores Lettuce--Onion, then follows L2's lead at $t{=}17$.}
    \label{fig:qualitative}
\end{figure}
\subsection{Qualitative Analysis: Co-Adaptation Dynamics\protect\footnote{We refer to the NestRL agent as L2 in this section, since it learns level-2 reasoning.}}\label{sec:qualitative}

To study \ref{rq3}, we ground our analysis in the behavioral taxonomy of \cite{vanzoelen2021coadaptation}, which characterizes successful human-robot teamwork through patterns such as waiting and initiative-taking. Crucially, these behaviors were not hard-coded but emerge from NestRL. Figure~\ref{fig:qualitative} illustrates these patterns; detailed description and statistical testing is in Appendix~\ref{app:qual}.

\textbf{L1 learns to wait under uncertainty.}
When paired with a fixed L0 partner whose convention is unknown, L1 systematically delays goal-directed actions until L0 reveals its recipe type, waiting in $96.8\%$ of $N{=}1{,}100$ episodes ($W = 3{,}125{,}837$, $p < .001$, $d = 1.26$). Episodes with waiting achieved significantly higher success rates ($54.9\%$ vs.\ $32.1\%$; $U = 122{,}994$, $p < .001$, $d = 0.46$), matching the \emph{waiting for team member to act} pattern in human-robot teams.

\textbf{L2 learns to take initiative.}
Trained against adaptive L1 partners that wait, L2 learns to reduce uncertainty by initiating early by acting first in $88\%$ of successful episodes ($W = 235$, $p = .008$, $d = 0.56$). When L2 led, L1 aligned with L2's recipe in $90\%$ of cases, confirming a leader-follower organization matching the \emph{initiative-taking} and \emph{following} co-adaptation patterns.

\textbf{Baselines fail to adapt to waiting partners.}
Agents trained only against fixed partners never encounter waiting behavior during training. When paired with L1, baselines lead in only $31.6$--$47.1\%$ of episodes (vs.\ $68\%$ for L2), and L2 initiates significantly earlier than the best baseline (\textsc{LiLi}; $U = 305$, $p < .001$, $d = 1.10$). Without exposure to adaptive partners, baselines produce either deadlock or miscoordination.

Nested training thus induces complementary strategies: L1's waiting creates an opportunity for symmetry breaking, and L2 exploits it by taking initiative---mirroring co-adaptation patterns in human-robot interaction \citep{vanzoelen2021coadaptation}.
\subsection{User Study}
\begin{wrapfigure}{r}{0.48\linewidth}
\vspace{-6pt}
\centering
\begin{tikzpicture}
\begin{axis}[
    width=\linewidth,
    height=5.2cm,
    xlabel={Episode within round},
    ylabel={Delivery rate (\%)},
    xtick={1,2,3,4,5},
    ymin=40, ymax=100,
    legend style={at={(0.02,0.98)}, anchor=north west, font=\scriptsize},
    grid=major,
    grid style={dashed, gray!30},
    tick label style={font=\scriptsize},
]
\addplot[blue, mark=square*, thick] coordinates {(1,46)(2,54)(3,53)(4,52)(5,50)};
\addplot[red, mark=triangle*, thick] coordinates {(1,59)(2,76)(3,80)(4,91)(5,89)};
\legend{PACE, NestRL (ours)}
\end{axis}
\end{tikzpicture}
\vspace{-8pt}
\caption{Delivery rate improves across episodes within a round as the robot adapts to the partner.}
\label{user-study2}
\end{wrapfigure}
We compare NestRL against the strongest baseline, PACE, in a within-subject user study with $N{=}25$ participants. Each participant plays two rounds of 5 episodes with each agent. After each round, participants complete the NASA Task Load Index~\citep{nasa-tlx} and rate partner proactiveness and adaptiveness We test three hypotheses: \textbf{H1: Task performance.} NestRL should achieve higher success rate than PACE on unseen adaptive human partners. \textbf{H2: Within-round adaptation.} NestRL should improve delivery rate across consecutive episodes within a round as it observes human behavior; this effect should be weaker or absent for PACE. \textbf{H3: Perceived workload.} Participants should report lower workload and perceive NestRL as more responsive.

\textbf{Results:}
NestRL achieves a higher episode delivery rate ($79.5\%$ vs.\ $51.8\%$ for PACE; paired $t(21) = 2.41$, $p = .025$, $d = 0.51$). Figure~\ref{user-study2} shows delivery rate by episode position within a round: PACE remains flat while NestRL trends upward, and a paired $t$-test on slopes confirms the difference ($t(21) = 2.87$, $p = .009$, $d = 0.61$). NestRL receives lower or equal NASA-TLX ratings on all six dimensions, with the strongest effect on temporal demand ($M{=}4.91$ vs.\ $5.64$; $t(21){=}3.84$, $p{=}.001$, $d{=}{-}0.82$). It is also rated significantly more adaptive ($M{=}4.41$ vs.\ $3.55$; $t(21){=}{-}2.53$, $p{=}.019$, $d{=}0.54$) and somewhat more proactive (no significant difference). The results provide converging evidence across all three hypotheses. Full results are in Appendix~\ref{app:user-results}.
\section{Conclusion}
We introduced \emph{NestRL}, a nested training regime for mutual adaptation in human–AI collaboration. Grounded in the I-POMDP formalism, NestRL models how each agent's behavior influences its partner's adaptive responses, enabling reasoning over multiple levels of partner beliefs, preventing convergence onto arbitrary conventions and enabling robust coordination with diverse and adaptive partners.
Through experiments in a multi-recipe variant of the Overcooked environment in simulation and a user study, we demonstrated that NestRL agents consistently outperform existing adaptive baselines.  We also find that NestRL agents learn coordination strategies that are observed previously in realistic human-robot settings. By aligning the learning process with the reasoning structure of finitely nested I-POMDPs, our framework provides a principled foundation for developing agents that collaborate effectively with diverse, adaptive partners. Future work includes extending NestRL to settings where agents do not share the same reward function.
\section*{Acknowledgements}
This research is supported in part by ONR (N00014-25-1-2301 and N00014-23-1-2409), DARPA (HR00112520016), DoD RAI (via CMU subcontract 25-00306-SUB-000), and a generous gift from Qualcomm. The research is supported in part by NSF grant 2303019 and a JP Morgan Faculty Award.

\bibliography{references}
\bibliographystyle{plainnat}
\newpage

\appendix
\section{Related Work}
\label{app:related}
\textbf{Population-Based and Opponent Modeling Approaches.}
Population-based training exposes agents to diverse but stationary partners~\citep{Bard2020, Tesauro1994, Jaderberg2017, Carroll2019, Lupu2021, Canaan2022, Zhao2021}. While effective for robustness, these methods do not explicitly model partner adaptation. Opponent modeling approaches infer other agents’ policies~\citep{he2016opponent, raileanu2018modeling} or differentiate through learning updates~\citep{foerster2017learning}, but often overfit to specific dynamics. When agents co-learn, coordination may emerge implicitly yet remain brittle or non-task-specific~\citep{hu2020other}. Other works assume simplified human models—such as greedy learners~\cite{mutual2} or known best responses~\cite{mutual3}—limiting exposure to diverse adaptive behaviors. LIAM~\citep{papoudakis2021agent}, LILI~\citep{xie2021learning}, and PACE~\citep{ma2024fast} adapt online to partner behavior via latent inference or meta-learning. While effective for stationary or weakly adaptive partners, they do not explicitly model how partners adapt in response to the agent, limiting performance under mutual adaptation.\\
\textbf{I-POMDPs and Type-Based Reasoning.}
I-POMDPs~\citep{gmytrasiewicz2005framework, doshi2005particle} model beliefs over other agents’ goals, policies, and beliefs. Tractable variants include I-POMDP Lite~\citep{hoang2013interactive}, Interactive-PF~\citep{doshi2005particle}, and Bayes-Adaptive I-POMDPs~\citep{ng2012bayes}. Type-based reasoning also appears in game-theoretic planning~\citep{albrecht2019reasoning}, and IPOMDP-Net~\citep{han2019ipomdp} integrates differentiable planning with deep learning. However, most I-POMDP methods struggle to scale to high-dimensional environments. Our approach approximates finitely nested reasoning through structured training in deep RL, avoiding explicit belief updates while preserving recursive reasoning structure.\\
\textbf{Challenges in Evaluating Mutual Adaptation}
Common MARL benchmarks such as particle environments~\citep{lowe2017multi} and StarCraft~\citep{ellis2023smacv2} permit independent or greedy strategies. Standard Overcooked~\citep{Carroll2019} provides limited equilibrium diversity. We evaluate in a modified Overcooked variant~\citep{charakorn2023} with distinct coordination equilibria that require adaptive reasoning.
\section{Experimental Setup\protect\footnote{Codebase is available at this anonymized github repository - https://anonymous.4open.science/r/anon-5BD5/README.md.}}
\label{app:exp}
\subsection{Environment and Task Setup}
\label{app:1}
We evaluate our approach in the Overcooked collaborative cooking environment~\cite{Carroll2019}, a standard benchmark for studying human–AI cooperation. Overcooked is a two-player game in which agents act as chefs collaborating to prepare and serve dishes through a sequence of interdependent sub-tasks such as picking up ingredients, chopping, plating, and serving. To promote diverse cooperative behaviors, we adopt a multi-recipe, required-coordination version of Overcooked (as seen in Fig.~\ref{fig:domain}) based on \citet{charakorn2023}. 

In this setting, players must complete and serve one of 3 pre-defined recipes as quickly as possible, as opposed to repeatedly delivering a single dish in the original overcooked environment. To prepare a recipe, each agent must first pick up the raw ingredient available on their side and place it on the chopping board. After chopping, the agent must pick up the processed ingredient and place it on the shared plate located at the middle counter. Once both required ingredients for the selected recipe are placed on the plate, the agent on the right side is responsible for picking up the completed dish and delivering it to the serving station. This sequence of sub-tasks demands coordination between the two agents to determine which recipe to pursue and how to synchronize their actions effectively. The environment features six ingredient types and three recipes, distributed such that each agent can access only a subset of the ingredients.

\paragraph{Observation Space}
Each agent receives a 105-dimensional observation vector encoding features such as agent position, orientation, held objects, objects in front of the agent, and additional flags indicating visibility to account for partial observability.
\paragraph{Action Space}
Agents act in a discrete space with six possible actions: move up, down, left, or right; interact with objects; or take no action (no-op).
\paragraph{Reward Structure}
PO-Overcooked is fully cooperative; all agents share rewards. Three types of rewards are defined:
\begin{itemize}
\item \textbf{Interaction Reward:} Each time an agent interacts with an object, all agents receive 0.5 points. Repeated interactions with the same object do not yield additional reward.
\item \textbf{Progress Reward:} Agents receive 1.0 point whenever a recipe progresses, such as when a chopped ingredient is placed on a plate, advancing the recipe state.
\item \textbf{Completion Reward:} When a dish satisfying the recipe is delivered to the delivery square, each agent receives 10 points.
\end{itemize}
\subsection{Peer Pool Generation}
We use the peer pool generation method used by ~\citet{PACE}.
In our experiments, we use it to construct skill-based peer agents, each specialized in preparing a particular recipe. Each agent is positioned on the right side of the kitchen and interacts primarily with the ingredients and sub-tasks required for its assigned recipe. The three recipes and their corresponding ingredients are as follows:
\begin{itemize}
\item \textbf{Tomato $\&$ Onion Salad:} requires Tomato and Onion.
\item \textbf{Carrot $\&$ Lettuce Salad:} requires Carrot and Lettuce.
\item \textbf{Potato $\&$ Broccoli Salad:} requires Potato and Broccoli.
\end{itemize}
A skill-based peer agent focuses its actions on handling the specific ingredients of its assigned recipe—whether picking up, chopping, or delivering the completed dish. For example, a Tomato $\&$ Onion Salad agent handles only Tomato and Onion, performing sub-tasks such as moving ingredients to the chopping board, chopping them, placing them on the shared plate, and delivering the completed dish.
During execution, each agent evaluates whether its current sub-task is complete. If not, it calculates the shortest path to the target location and moves accordingly. Once a sub-task is finished, the agent selects the next sub-task from the set associated with its assigned recipe.
\subsection{Belief Estimation for Non-Markovian Policies}
\label{subsec:nonmarkovian}
Interactive decision-making between adaptive agents is non-Markovian from the perspective of each agent. 
The optimal action at time $t$ depends not only on the current observation, but on the history of interaction, which determines beliefs over the partner’s internal model. 
In the I-POMDP formulation, this dependence appears as recursive belief updates over interactive states. 
Exact belief tracking is intractable.
We approximate this using a learned latent embedding that amortizes belief inference.

\textbf{Latent Interactive State: }
At training level $n$, the ego agent interacts with a finite set of fixed level-$(n-1)$ policies 
$\{\pi_{n-1}^1, \dots, \pi_{n-1}^k\}$. 
The true partner type is hidden and can be inferred from the interaction history. 
We summarize the interaction history into a learned embedding:
\begin{equation}
z_t^{(n)} = f_\theta^{(n)}(h_t^{(n)}), 
\quad 
h_t^{(n)} = \{(o_\tau^{(n)}, a_\tau^{(n)})\}_{\tau=0}^{t}.
\end{equation}

Here $f_\theta^{(n)}$ is implemented as an RNN encoder. 
The latent variable $z_t^{(n)}$ serves as a compact representation of uncertainty over the partner’s type and adaptation pattern.

\textbf{Policy Conditioning: }
The level-$n$ policy conditions on both the current observation and the latent embedding:
\begin{equation}
a_t^{(n)} \sim 
\pi_\theta^{(n)}(a \mid o_t^{(n)}, z_t^{(n)}).
\end{equation}

This replaces explicit I-POMDP belief updates:
\begin{equation}
b_{i,t+1}(is') \propto 
O_i(o_{t+1} \mid is', a_t^i, a_t^j)
\sum_{is} 
T_i(is, a_t^i, a_t^j, is') \, b_{i,t}(is),
\end{equation}
with amortized inference through $z_t^{(n)}$.

\textbf{Training Objective: }
Encoder and policy parameters are optimized end-to-end using PPO:
\begin{equation}
\max_{\theta} 
\mathbb{E}_{\pi_\theta^{(n)}, \pi_{n-1}}
\left[
\sum_{t=0}^{T} 
\gamma^t R(s_t, a_t^{(i)}, a_t^{(j)})
\right],
\end{equation}
where $\pi_{n-1}$ denotes fixed lower-level policies.

The interaction is non-Markovian in the original state space due to nested belief dependence. 
The latent embedding $z_t^{(n)}$ estimates the interactive beliefs, while rendering the policy approximately Markovian in $(o_t, z_t)$. 
Thus, level-$n$ policies implement amortized belief reasoning consistent with a finitely nested I-POMDP. Each successive policy $\pi_n$ learns a mapping in the latent state space that implicitly captures the non-Markovian structure of interactive reasoning.
\section{Qualitative Analysis of Adaptive Behavior}
\label{app:qual}
While the previous section establishes that L2 achieves higher success rates with adaptive partners, it does not explain how this is achieved. To uncover the underlying mechanisms, we analyze the interactions between teammates.

Our analysis is grounded in the behavioral taxonomy of co-adaptation proposed by \citet{vanzoelen2021coadaptation}, which characterizes mutual adaptation in human–robot teams through interaction patterns such as waiting for a team member to act, following a partner’s action, and signaling intent. We operationalize these patterns in our domain to identify the coordination strategies that emerge under nested training. Note that these behaviors were not hard-coded into the training procedure but emerge as a result of the training regime. Their alignment with well-established behaviors indicate that the agents are not merely optimizing reward, but exhibiting structured co-adaptation.
\subsection{Level-1 Coordination Strategy: Waiting Under Uncertainty}
When paired with a fixed Level-0 (L0) agent, L1 initially faces uncertainty over which recipe L0 will pursue and must infer the partner’s intended goal from observed actions. Acting before sufficient information is available risks miscoordination. An adaptive strategy is therefore to \emph{wait}—to defer working on a specific recipe until L0's behavior reveals its type. This corresponds to the \emph{waiting for team member to start acting} pattern identified in human-robot teams \citep{vanzoelen2021coadaptation}. To examine whether such a strategy emerges under nested training, we test three hypotheses:
\begin{itemize}
    \item \textbf{H1 :} 
    \emph{The agent delays goal-directed actions until the partner’s actions sufficiently reduce its uncertainty over the partner’s type.}
    
    \item \textbf{H2 :} 
    \emph{Episodes in which L1 waits for its partner achieve higher success rates than episodes in which L1 does not wait.}
    
    \item \textbf{H3 :} 
    \emph{L1’s strategy changes after it's uncertainty over partner type collapses.}
\end{itemize}
We operationalize waiting by defining $T_{\mathrm{reveal}}$ as the normalized timestep at which L0 first performs a recipe-specific action, revealing its type, and $T_{\mathrm{start}}$ as the normalized timestep at which L1 first begins working on a specific recipe. Waiting corresponds to $T_{\mathrm{start}} > T_{\mathrm{reveal}}$.

\textbf{Results :} Across $N = 1100$ episodes, L0 revealed early ($T_{\mathrm{reveal}} = 0.145$, SD = 0.063), whereas L1 began working later ($T_{\mathrm{start}} = 0.329$, SD = 0.227). In $96.8\%$ of episodes, L1 started after L0 revealed its type. A one-sided Wilcoxon signed-rank test confirmed that $T_{\mathrm{start}} - T_{\mathrm{reveal}} > 0$ ($W = 3{,}125{,}837$, $p < .001$, $d = 1.26$), supporting H1. Therefore, we can establish that systematic waiting behavior emerges in L1 agents under nested training. Episodes in which L1 waited ($T_{\mathrm{commit}} > T_{\mathrm{reveal}}$; $n = 2{,}474$) achieved a success rate of $54.9\%$, compared to $32.1\%$ for episodes in which L1 did not wait ($n = 81$). A one-sided Mann-Whitney $U$ test confirmed this difference ($U = 122{,}994$, $p < .001$, Cohen's $d = 0.46$), supporting H2. Therefore, we can establish that  deviations from the waiting behavior is systematically associated with task failure. Add results of this test. Therefore, we can establish that L1’s behavior is adaptive to the partner type and dependent on the uncertainty in partner type.

Therefore, we validate a key assumption of the NestRL: Level-1 training produces adaptive partners and exhibits systematic waiting behavior consistent with established co-adaptation patterns of interaction.

\subsection{Level-2 Coordination Strategy: Taking Initiative}
In the previous section, we established that L1 exhibits systematic waiting behavior under partner-type uncertainty. Under the nested training regime, L2 is trained against such adaptive L1 policies. From an I-POMDP perspective, L2 must anticipate how its own actions influence L1’s subsequent behavior. If L1 waits under uncertainty, the adaptive strategy for L2 is to reduce that uncertainty through informative goal-directed actions. Therefore, L2 initiates goal-directed behavior, and L1 conditions its subsequent actions on that, resulting in complementary leader–follower coordination. This corresponds to the \emph{initiative-taking} and \emph{following a team member’s action} pattern which characteristic of successful co-adaptation in human–robot teams\citep{vanzoelen2021coadaptation}. To examine whether such a strategy emerges under nested training, we test two hypotheses:

\begin{itemize}
    \item \textbf{H4 :} 
    \emph{The L2 agent initiates goal-directed behavior earlier than the L1 agent.}
    \item \textbf{H5 :} 
    \emph{L2’s actions influences L1’s actions.}
\end{itemize}
H4 establishes that L2 takes the lead, while H5 establishes that L1 responds to that lead, showing successful co-adaptation.

\textbf{Results : }In successful L2-L1 episodes, L2 initiated goal-directed behavior at $T_{\mathrm{start}}^{\mathrm{L2}} = 0.530$ (SD $= 0.128$), while L1 began later at $T_{\mathrm{start}}^{\mathrm{L1}} = 0.684$ (SD $= 0.228$). L2 acted first in $68\%$ of episodes. A Wilcoxon signed-rank test confirmed that $T_{\mathrm{start}}^{\mathrm{L1}} - T_{\mathrm{start}}^{\mathrm{L2}} > 0$ ($W = 235$, $p = .008$, $d = 0.56$), supporting H4. To test whether L1's behavior is contingent on L2's actions, we measure the \emph{response latency}: the time between L2's first goal-directed action and L1's first goal-directed action. If L1 is responding to L2 (rather than acting on a fixed schedule), we expect L1 to act shortly after L2. In episodes where L2 acted first ($n = 17$), L1's response latency was $T_{\mathrm{start}}^{\mathrm{L1}} - T_{\mathrm{start}}^{\mathrm{L2}} = 0.16$ (SD $= 0.12$), suggesting that L2's provides a clearer signal to L1. Furthermore, in successful episodes where L2 acted first, L1 aligned with L2's recipe in $90\%$ of cases , confirming that L1 follows L2's lead rather than pursuing an independent goal.

 Therefore, we validate that L2 has learned to respond to adaptive dynamics of its partner. By consistently initiating goal-directed behavior when paired with L1, L2 stabilizes coordination. This provides evidence that L2 adapts to an adpative partner, yielding structured co-adaptation consistent with established human–robot interaction patterns \citep{vanzoelen2021coadaptation}.
 \subsection{Baseline Coordination Strategy: Failure to Adapt to Waiting Partners}

The preceding analyses show that successful L2–L1 coordination depends on symmetry breaking: when paired with a waiting L1, L2 initiates goal-directed behavior, enabling complementary leader–follower organization. Baselines, however, are trained only against fixed partners and are not exposed to adaptive waiting behavior during training. As a result, they may fail to take initiative when paired with L1. To examine whether this explains their performance gap, we test the following hypothesis:

\begin{itemize}
    \item \textbf{H6 :} 
    \emph{Baseline agents initiate goal-directed behavior less frequently before L1 compared to L2.}
\end{itemize}
\textbf{Results :}Baselines initiate later and lead less frequently: \textsc{Generalist} at $T = 0.590$ ($\mathrm{SD} = 0.162$), leading in $40.0\%$ of episodes; \textsc{PACE} at $T = 0.576$ ($\mathrm{SD} = 0.157$), leading in $31.6\%$; and \textsc{LiLi} at $T = 0.673$ ($\mathrm{SD} = 0.122$), leading in $47.1\%$. L2 initiates significantly earlier than \textsc{LiLi} (Mann–Whitney $U = 305$, $p < .001$, $d = 1.10$), supporting H6.

These results indicate that baselines do not adapt when paired with an adpative partner. Without exposure to adaptive behavior during training, they lack the critical insight that L1 will \emph{wait} for a signal. When paired with a waiting L1, baselines either hesitate (producing deadlock) or commit arbitrarily (producing miscoordination).

\paragraph{Summary : } The analysis reveals a coherent picture of how the NestRL shapes coordination behavior:
\begin{enumerate}[nosep,leftmargin=*]
    \item \textbf{Level-1 training produces adaptive partners.} L1 learns a wait-then-follow strategy: defer commitment until the partner reveals intent, then align.   
    \item \textbf{Level-2 training learns to respond to adaptive partners.} L2 learns to initiate goal-directed behavior in the presence of waiting partners ($88\%$ leading rate, $d = 0.56$), preventing mutual hesitation.
    
    \item \textbf{L1 and L2 strategies complement.} L2's early goal-directed actions triggers L1's following behavior, confirming that both agents adapt to each other.
    
    \item \textbf{Baselines lack the critical adaptation.} Agents trained only against fixed partners neither reliably initiate ($31.6$-$47.1\%$ leading rate) nor succeed when doing so ($33.3$-$37.5\%$), explaining their degraded performance with adaptive L1 partners.
\end{enumerate}

This behavioral - L2 triggering following by L - aligns with the co-adaptation patterns documented in human-robot interaction \citep{vanzoelen2021coadaptation}. Nested training improves success rates by inducing the structural mechanisms of co-adaptation in the trained agents.

\section{User Study: Interface and Procedure}
We recruit $N{=}25$ participants via the Prolific crowdsourcing platform (all fluent English speakers; mean age $35.1 \pm 6.9$; 6~had no prior Overcooked experience).
A familiarization phase (one practice episode plus two 5-episode rounds with rule-based agents) ensures that participants are comfortable with the game mechanics before the main evaluation.
In the evaluation phase, each participant plays two rounds of 5~episodes with each agent.  
An episode is successful if the recipe is delivered before the 40-step horizon elapses; a round is successful if all 5~episodes result in delivery.
After each round, participants complete the NASA Task Load Index~\citep{nasa-tlx}, which measures perceived workload across six dimensions (mental demand, physical demand, temporal demand, perceived performance, effort, and frustration) on a 7-point scale.
We also ask: (i)~\emph{partner proactiveness} (``The agent proactively contributed to completing the recipe'') and
(ii)~\emph{partner adaptiveness} (``The agent adapted its behavior to coordinate with me'').
\label{app:user-study-interface}

We detail the study flow, interface design, and data collection procedure.
All screenshots are taken from the deployed study platform.

The study is hosted as a web application built on a Python~(Flask) backend that manages game sessions, serves the Overcooked environment in real time at 10~frames per second, and logs participant interactions.
Each participant receives a unique pseudonymized session identifier.
The platform was reviewed and approved by the Institutional Review Board (IRB code: SS64726).
The study proceeds through six sequential phases:
(1)~informed consent,
(2)~demographics,
(3)~game instructions,
(4)~practice,
(5)~trial rounds,
and (6)~main evaluation rounds with post-round questionnaires.
We describe each phase below.

\begin{figure}[p]
\centering
\includegraphics[width=0.85\textwidth]{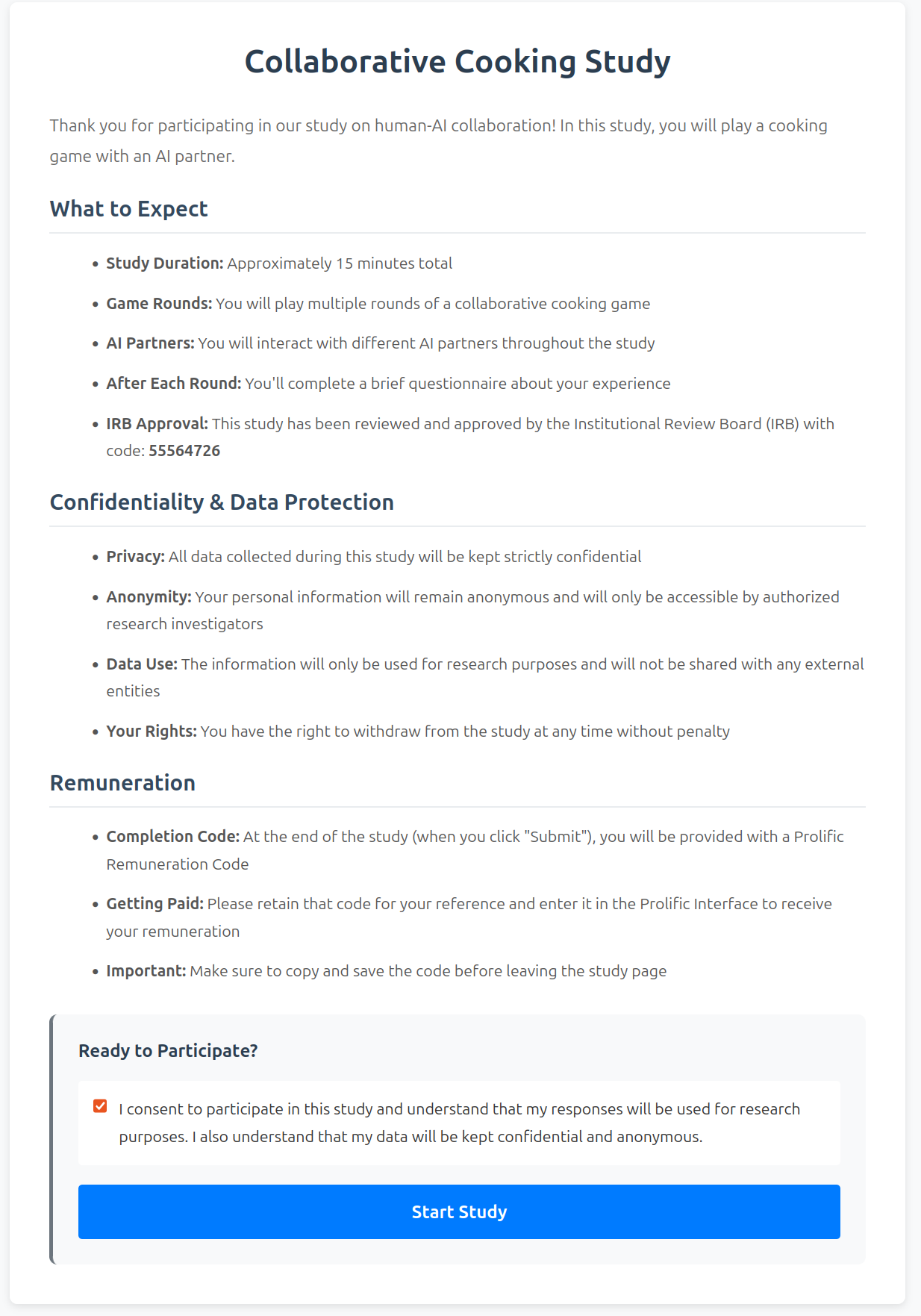}
\caption{\textbf{Phase~1: Informed consent.} Upon opening the study link, participants are briefed on the study duration (${\sim}15$~minutes), confidentiality and data protection policies, remuneration structure, and their right to withdraw at any time. The page displays the IRB approval code and explains how participants will receive their completion code and payment via the Prolific platform. A consent checkbox must be actively checked before proceeding, confirming that the participant understands the study conditions and agrees to participate.}
\label{fig:app-consent}
\end{figure}

\begin{figure}[p]
\centering
\includegraphics[width=0.55\textwidth]{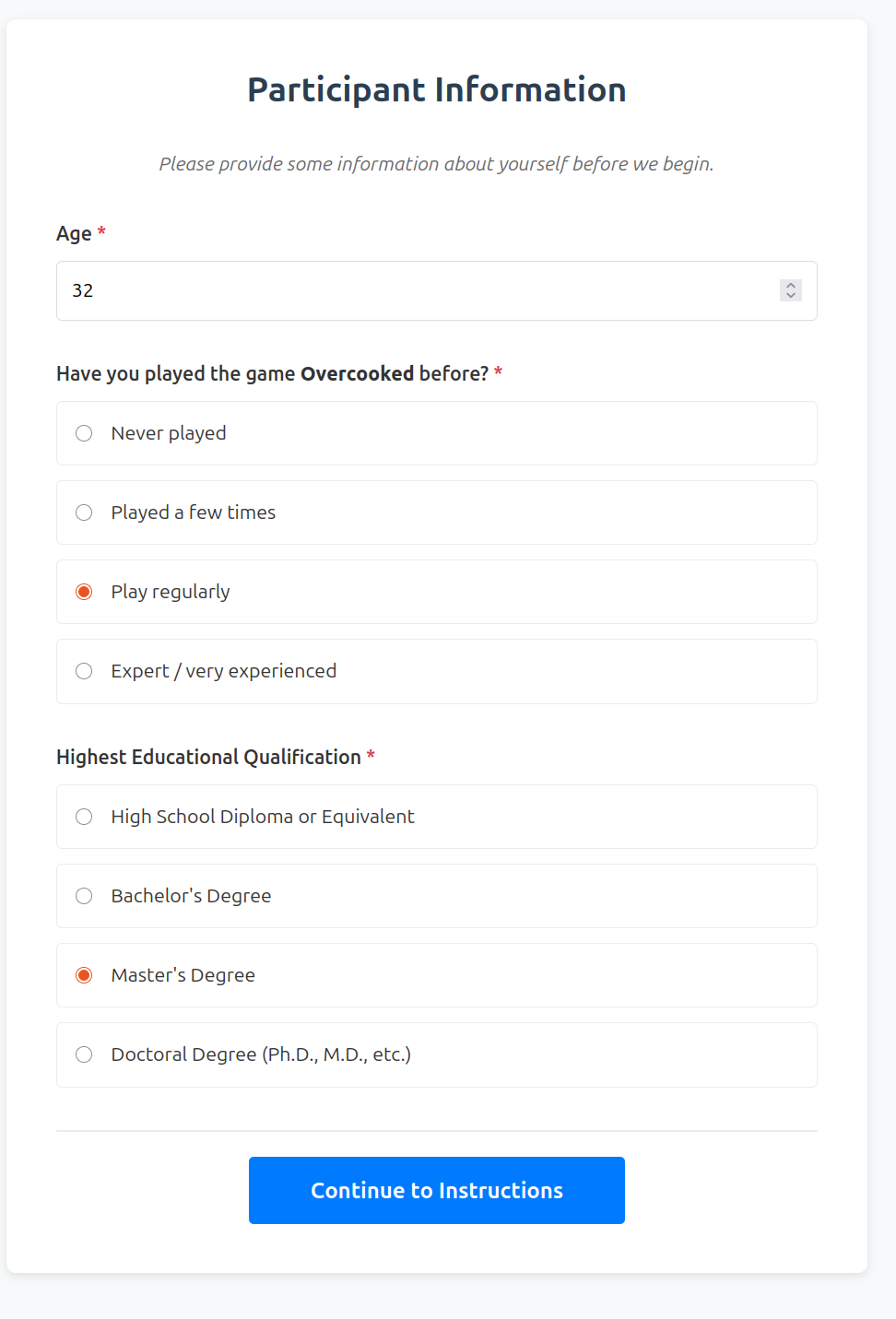}
\caption{\textbf{Phase~2: Demographics questionnaire.} Participants report their age, prior experience with the Overcooked game (on a four-point scale from ``Never played'' to ``Expert / very experienced''), and highest educational qualification. These data are used to characterize the participant pool and to check whether prior game experience moderates the observed effects.}
\label{fig:app-demographics}
\end{figure}

\begin{figure}[p]
\centering
\includegraphics[width=0.95\textwidth]{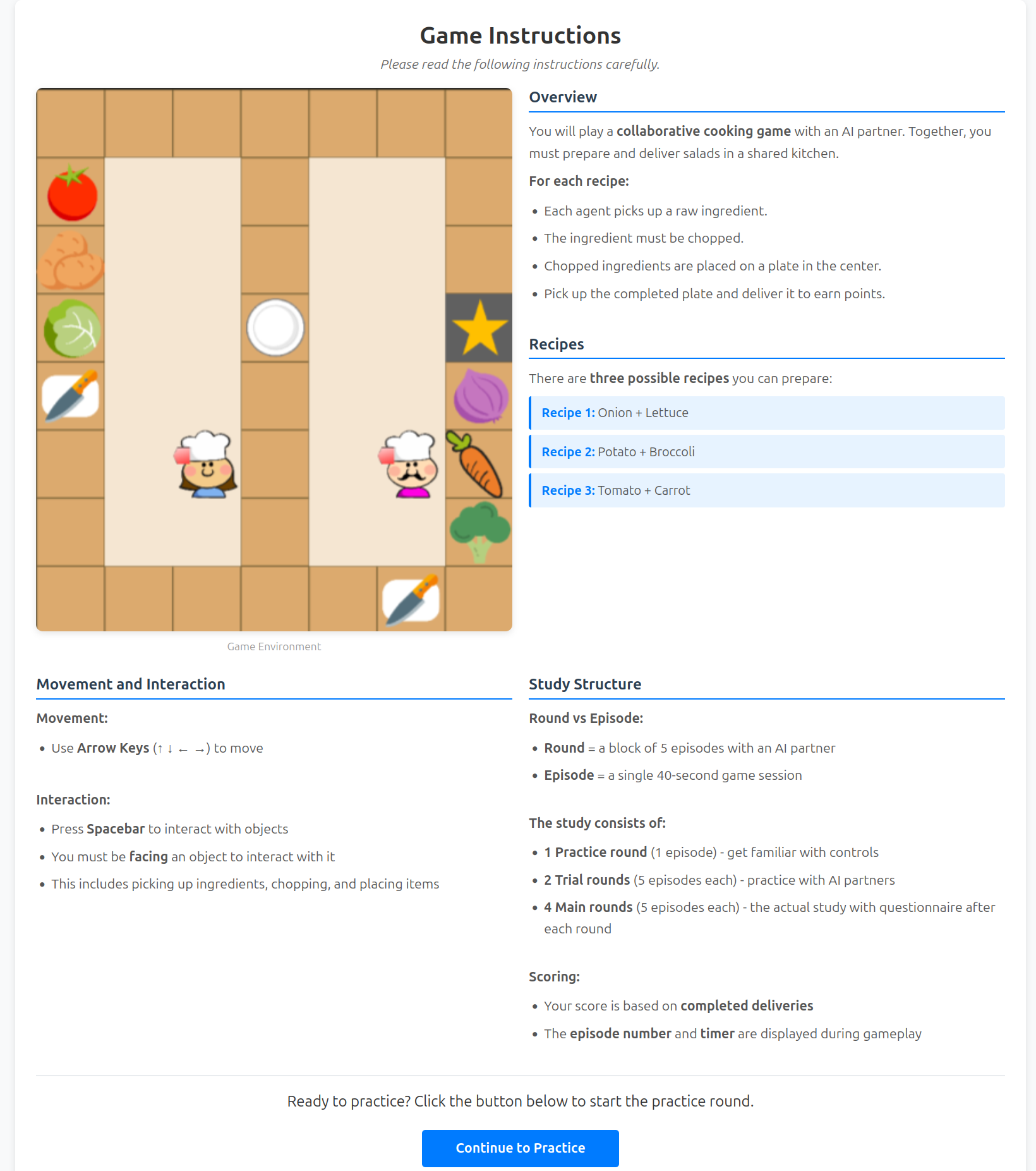}
\caption{\textbf{Phase~3: Game instructions.} The left panel displays the game environment, showing the kitchen layout with ingredient stations (tomato, lettuce, onion, potato, broccoli, carrot), chopping boards, a shared center plate, and the delivery station (star). Two chef avatars represent the human participant and the AI partner. The right panel explains the game mechanics in four steps: (1)~pick up an ingredient, (2)~chop it on a cutting board, (3)~place it on the shared plate, and (4)~deliver the completed dish. The three valid recipes are listed (Lettuce + Onion, Potato + Broccoli, Tomato + Carrot). The study structure is also explained: 1~practice round, 2~trial rounds (5~episodes each), and 4~main rounds (5~episodes each), with each episode lasting 40~seconds.}
\label{fig:app-instructions}
\end{figure}

\begin{figure}[p]
\centering
\includegraphics[width=0.95\textwidth]{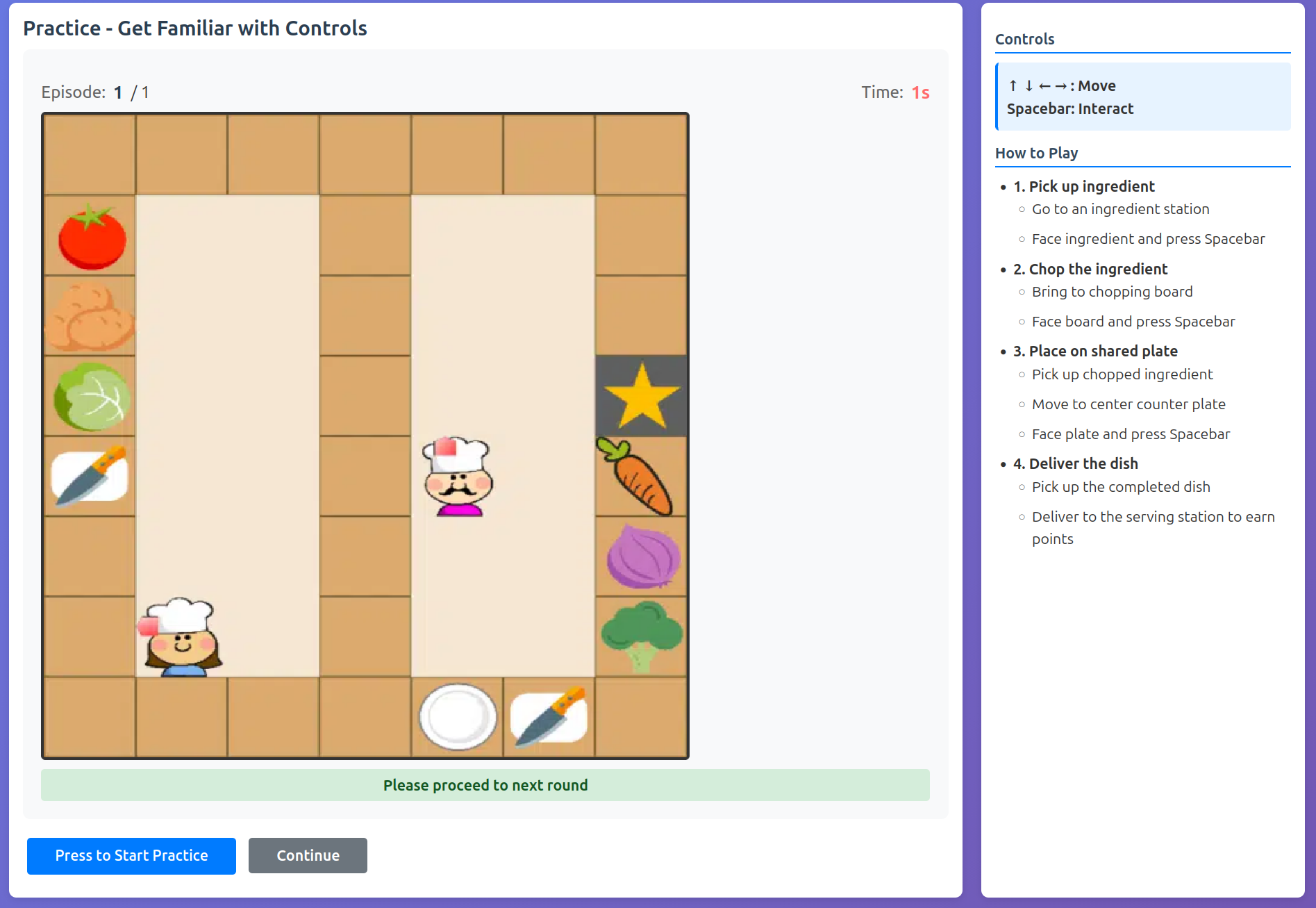}
\caption{\textbf{Phase~4: Practice round.} The participant controls one chef avatar using arrow keys (movement) and spacebar (interaction), while a rule-based AI partner controls the other. A persistent sidebar displays the control scheme and a step-by-step ``How to Play'' guide. This single-episode round is explicitly marked as practice and does not affect scoring or payment, allowing participants to build familiarity with the interface before the evaluation begins.}
\label{fig:app-practice}
\end{figure}

\begin{figure}[p]
\centering
\includegraphics[width=0.80\textwidth]{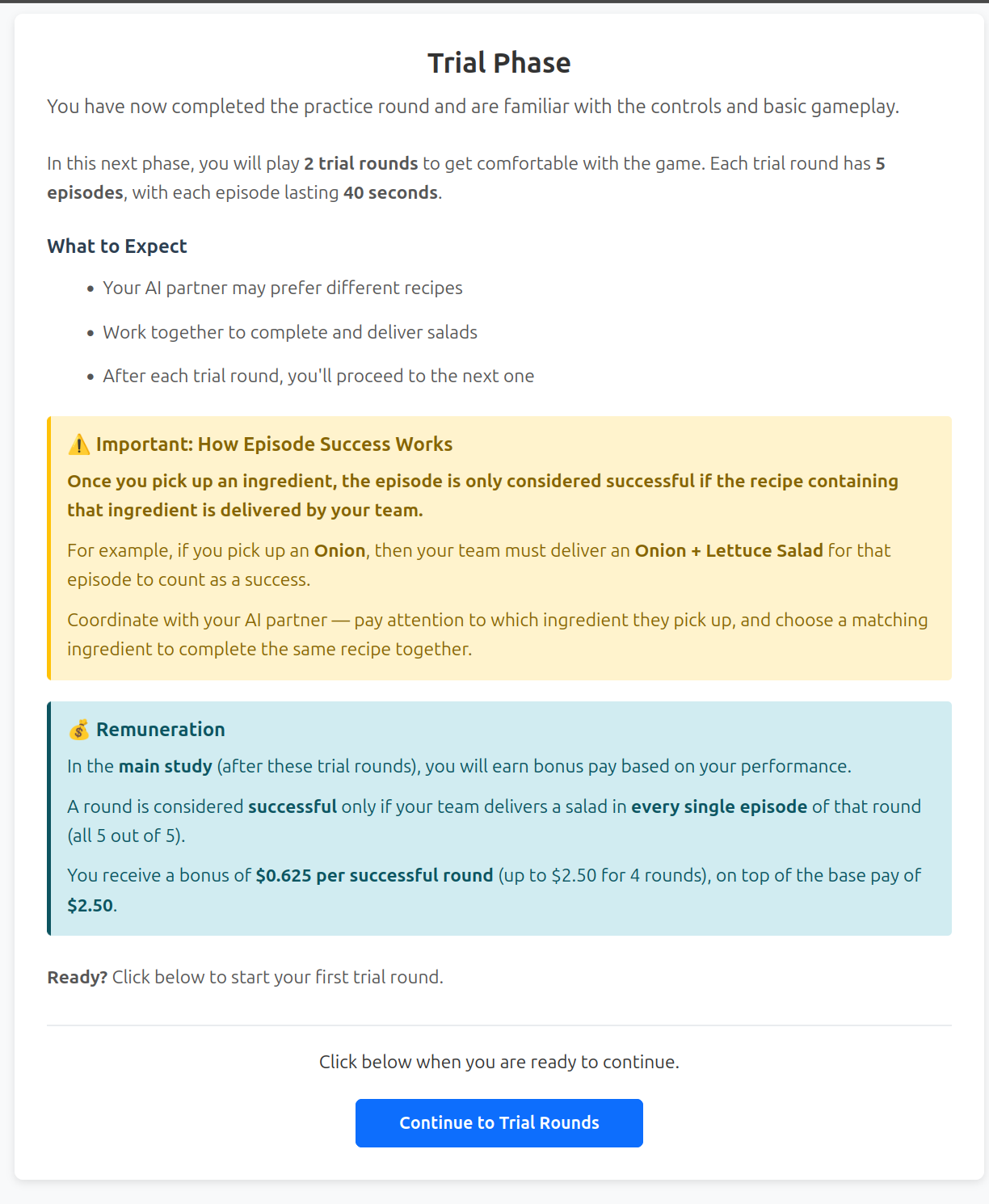}
\caption{\textbf{Phase~5a: Trial phase introduction.} Before the trial rounds begin, participants are briefed on what to expect: 2~trial rounds of 5~episodes each, with each episode lasting 40~seconds. The page explains that the AI partner may prefer different recipes and clarifies the success criterion---an episode is successful only if the recipe containing the ingredient the participant picked up is delivered. This incentivizes coordination with the AI partner. The remuneration structure for the upcoming main rounds is also previewed.}
\label{fig:app-trial-intro}
\end{figure}

\begin{figure}[p]
\centering
\includegraphics[width=0.95\textwidth]{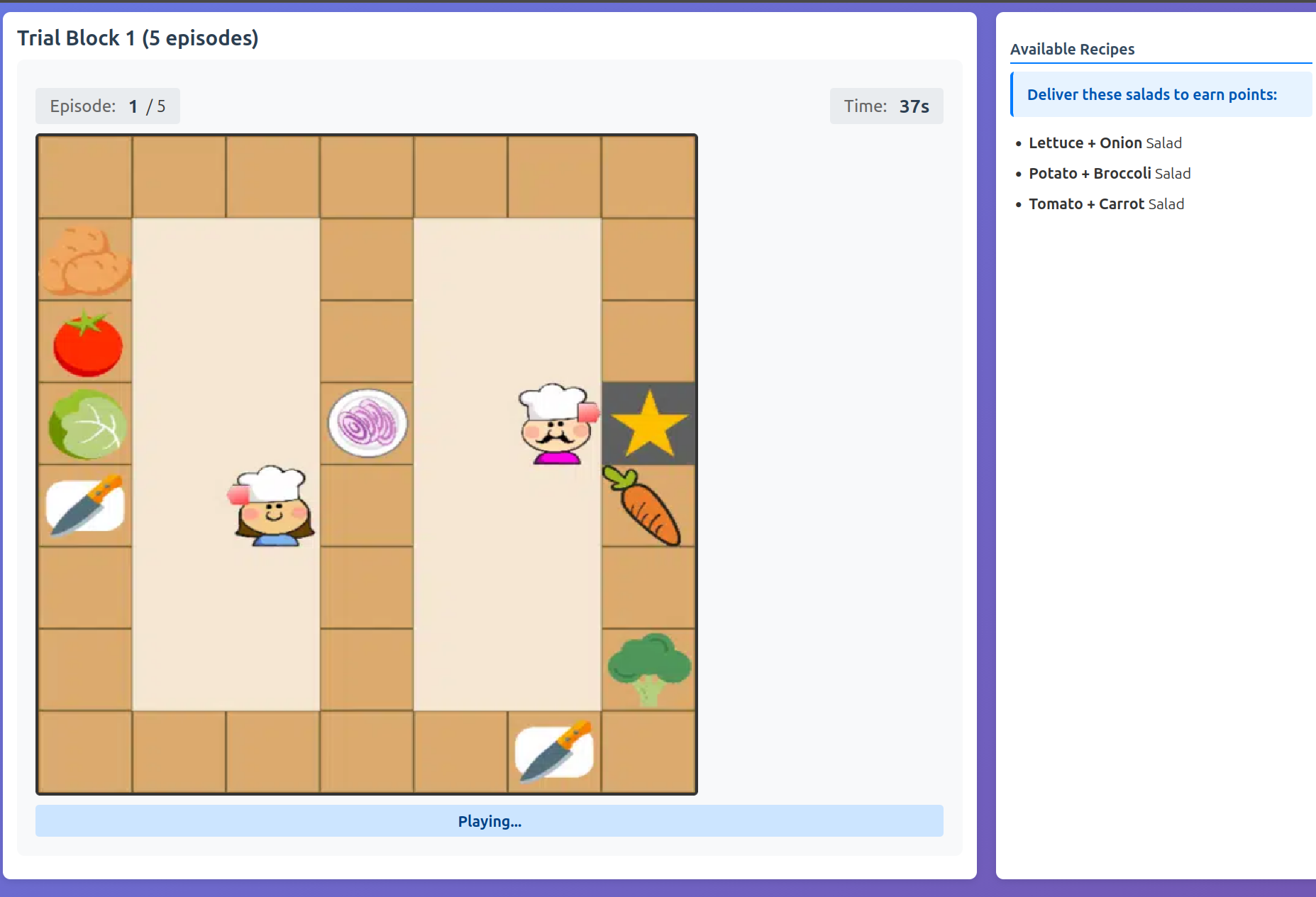}
\caption{\textbf{Phase~5b: Trial round gameplay} (Trial Block~1). The header displays the current block, episode counter (e.g., ``Episode~1/5''), and a countdown timer. The right sidebar lists the three available recipes. Participants play 2~trial blocks of 5~episodes each with rule-based AI partners, developing fluency with the interface and game dynamics before the main evaluation.}
\label{fig:app-trial-gameplay}
\end{figure}

\begin{figure}[p]
\centering
\includegraphics[width=0.70\textwidth]{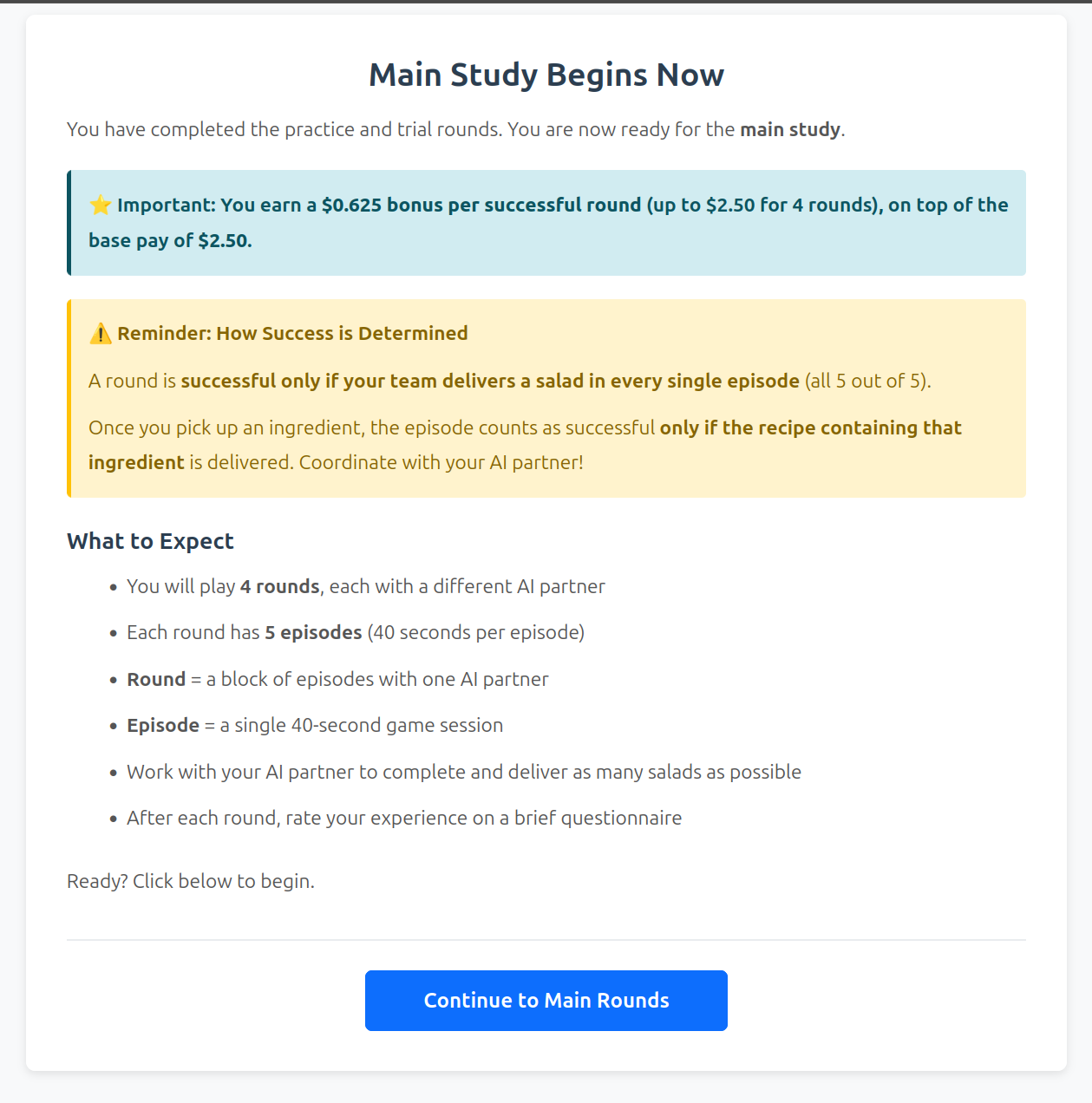}
\caption{\textbf{Phase~6a: Main study introduction.} A transition page informs participants that the performance-contingent bonus is now active: \$0.625 per successful round (all 5~episodes delivered), up to a maximum bonus of \$2.50 on top of the \$2.50 base payment. The success criterion is reiterated with a highlighted reminder. Participants will play 4~main rounds, each with a different AI partner.}
\label{fig:app-main-intro}
\end{figure}

\begin{figure}[p]
\centering
\includegraphics[width=0.95\textwidth]{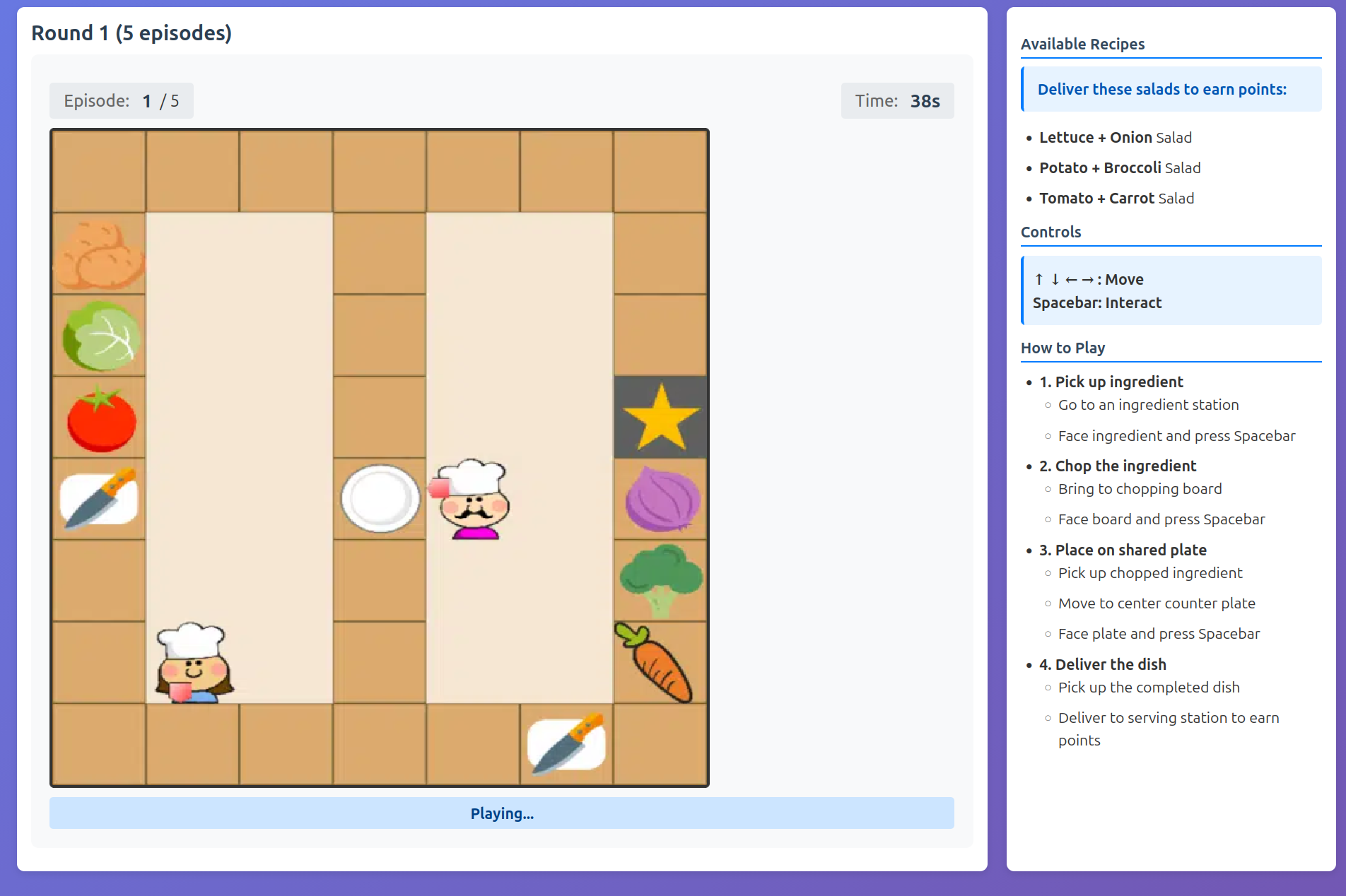}
\caption{\textbf{Phase~6b: Main round gameplay} (Round~1). The interface includes the game canvas, episode and time indicators, available recipes, control reference, and step-by-step instructions. The participant is paired with one of the evaluated AI agents (PACE or L2); each agent is played for 2~rounds. The agent assignment order is counterbalanced across participants to control for order effects.}
\label{fig:app-main-gameplay}
\end{figure}

\begin{figure}[p]
\centering
\includegraphics[width=0.60\textwidth]{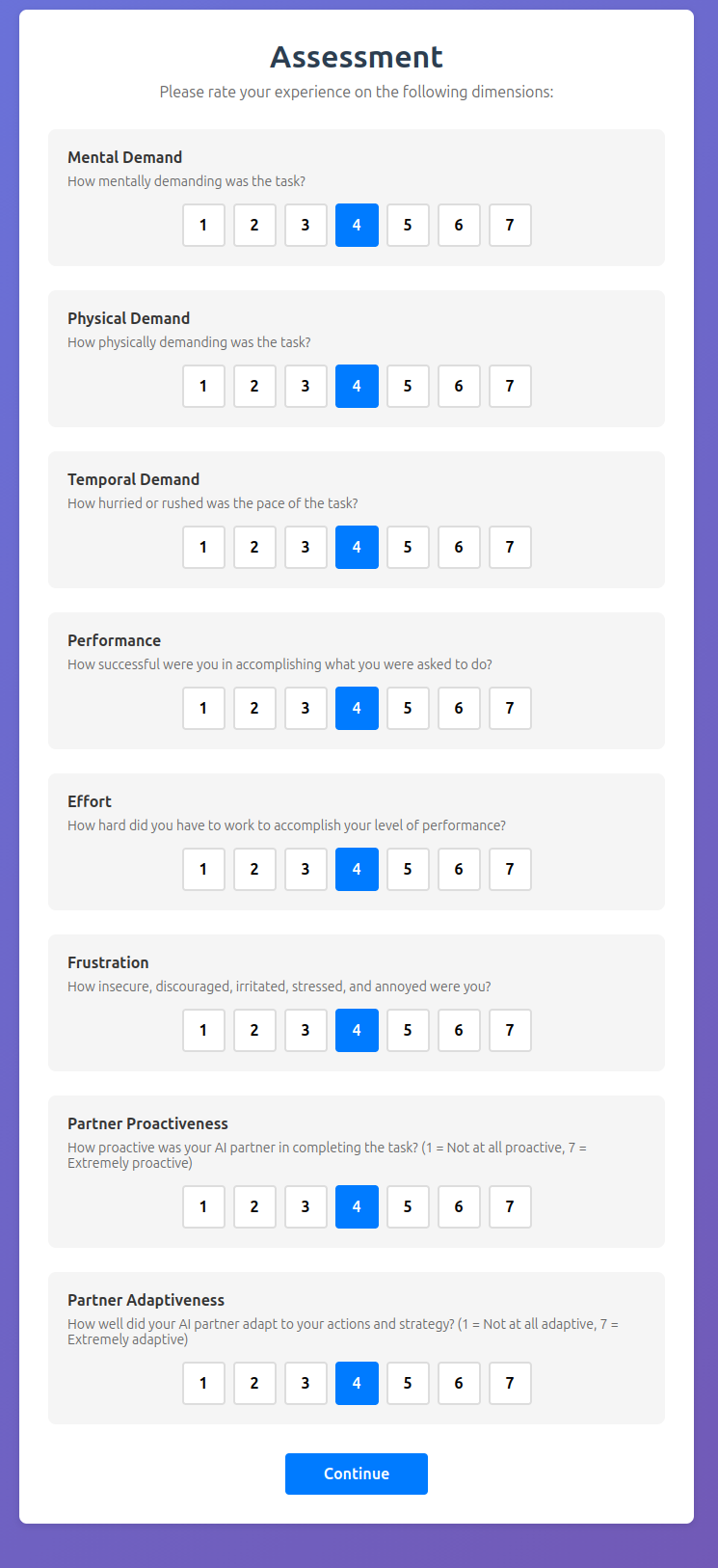}
\caption{\textbf{Phase~6c: Post-round assessment.} After each of the 4~main rounds, participants rate their experience on eight 7-point Likert scales. The first six items correspond to the NASA Task Load Index: mental demand, physical demand, temporal demand, performance, effort, and frustration. Two additional items assess perceived partner quality: \emph{proactiveness} (``How proactive was your AI partner in completing the task?'') and \emph{adaptiveness} (``How well did your AI partner adapt to your actions and strategy?''). These items test whether participants can subjectively distinguish agents trained with nested adaptation from those that are not.}
\label{fig:app-assessment}
\end{figure}

\section{User Study Results}
\label{app:user-results}
\begin{table}[H]
\centering
\small
\setlength{\tabcolsep}{5pt}
\renewcommand{\arraystretch}{1.15}

\begin{tabular}{p{7.5cm}cc}
\toprule
\textbf{Question} & \textbf{PACE} & \textbf{NestRL (ours)} \\
\midrule

How mentally demanding was the task? $^{**}$ $\downarrow$ 
& 6.05 & \textbf{4.89} \\

How physically demanding was the task? $\downarrow$ 
& 5.18 & \textbf{4.09} \\

How rushed or hurried did you feel? $^{**}$ $\downarrow$ 
& 6.64 & \textbf{4.91} \\

How successful was the team at completing the task? $^{**}$ $\uparrow$ 
& 3.59 & \textbf{5.48} \\

How hard did you have to work to accomplish the task?$^{**}$ $\downarrow$ 
& 7.77 & \textbf{5.59} \\

How insecure, discouraged, or irritated did you feel? $^{**}$ $\downarrow$ 
& 6.23 & \textbf{4.32} \\

The agent proactively contributed to completing the recipe. $^{*}$ $\uparrow$ 
& 2.91 & \textbf{4.23} \\

The agent adapted its behavior to coordinate with me. $^{*}$ $\uparrow$ 
& 2.55 & \textbf{4.41} \\

\bottomrule
\end{tabular}

\caption{
Subjective ratings from the user study ($n{=}25$) on a 7-point scale. 
The first six items correspond to NASA-TLX workload measures, and the last two evaluate perceived robot behavior. 
Arrows indicate whether lower ($\downarrow$) or higher ($\uparrow$) values correspond to better outcomes. 
Participants report significantly lower temporal demand and frustration with NestRL, and rate it higher on perceived team performance and adaptiveness compared to PACE 
($^{*}p{<}.05$, $^{**}p{<}.01$).
}

\label{tab:user-study-tlx}

\end{table}
The NestRL agent achieves a higher episode delivery rate ($79.5\%$) compared to PACE ($51.8\%$; paired $t(21) = 2.41$, $p = .025$, Cohen's $d = 0.51$).. Figure~\ref{user-study2} plots delivery rate by episode position within a round, aggregated across both rounds per agent.
PACE maintains a consistent success rates across episodes~1-5, whereas NestRL shows a clear upward trend. A paired $t$-test on the slopes confirms this difference is significant ($t(21) = 2.87$, $p = .009$, $d = 0.61$). 
Because NestRL was trained against adaptive Level-1 partnerss, it updates its strategy as it accumulates interaction experience within a round.
PACE, which is not trained against adaptive partners, does not exhibit this behavior. We report NASA-TLX ratings averaged across all rounds in the appendix.
We test each dimension with a paired $t$-test (two-tailed).
Across the six workload dimensions, NestRL receives lower or equal ratings on all six items. This is statistically validated. 
The strongest signal is temporal demand: participants report significantly lower time pressure with NestRL ($M{=}4.91$ vs.\ $5.64$; $t(21)=3.84$, $p{=}.001$, $d{=}{-}0.82$).
 NestRL is also rated as significantly more adaptive ($M{=}4.41$ vs.\ $3.55$; $t(21)={-}2.53$, $p{=}.019$, $d{=}0.54$) and somewhat more proactive ($M{=}4.23$ vs.\ $3.91$; $t(21)={-}0.89$, $p{=}.384$, $d{=}0.19$).
\section{Training Details}
The baseline methods (LILI, LIAM, LIAMX, Generalist, and PACE) follow a similar overall training structure, with differences primarily in auxiliary tasks and the use of exploration bonuses. All baselines are trained using Proximal Policy Optimization (PPO)~\citep{schulman2017proximal, pytorchrl} in the PO-Overcooked environment. For all methods, the recurrent network is implemented as a single-layer GRU with 128 hidden units. Training is performed using back-propagation through time (BPTT), with gradients truncated every 20 steps. The actor and critic share both the GRU and the preceding hidden layers. Algorithms that incorporate auxiliary tasks combine an additional loss term with the main RL objective using the same mini-batch samples as in RL updates. Specifically, PACE applies the auxiliary loss with a weight of 1.0. LIAM uses a weight of 1.0 for both action and observation prediction. LILI leverages context from the previous episode~\citep{xie2021learning}, applying an auxiliary loss with weight 1.0 for both reward and next-observation prediction.

\begin{table*}[!ht]
\centering
\resizebox{\linewidth}{!}{
\begin{tabular}{lccccccccccc}
\toprule
Algorithm & Learning Rate & PPO Clip $\epsilon$ & Entropy Coef & $\gamma$ & GAE $\lambda$ & Batch Size & \# Epochs & \# Mini-batches & Grad Clip (L2) & Activation & Hidden Dims \\
\midrule
Level-n Training & 1e-3 & 0.2 & 0.03 & 0.99 & 0.95 & 72000 & 4 & 18 & 15.0 & ReLU & [128,128] \\
PACE & 1e-3 & 0.2 & 0.03 & 0.99 & 0.95 & 72000 & 4 & 18 & 15.0 & ReLU & [128,128] \\
Generalist & 1e-3 & 0.2 & 0.03 & 0.99 & 0.95 & 72000 & 4 & 18 & 15.0 & ReLU & [128,128] \\
LILI & 1e-3 & 0.2 & 0.03 & 0.99 & 0.95 & 72000 & 4 & 18 & 15.0 & ReLU & [128,128] \\
LIAM & 1e-3 & 0.2 & 0.03 & 0.99 & 0.95 & 72000 & 4 & 18 & 15.0 & ReLU & [128,128] \\
LIAMX & 1e-3 & 0.2 & 0.03 & 0.99 & 0.95 & 72000 & 4 & 18 & 15.0 & ReLU & [128,128] \\
\bottomrule
\end{tabular}
}
\caption{Hyperparameters for all algorithms in the Overcooked environment.}
\label{tab:hyperparameters}
\end{table*}
\clearpage
\newpage

\end{document}